\theoremstyle{plain}
\newtheorem{theorem}{Theorem}
\theoremstyle{definition}
\theoremstyle{remark}
\newtheorem{remark}{Remark}
\newtheorem{definition}{Definition}
\newtheorem{lemma}{Lemma}
\newtheorem{corollary}{Corollary}
\newtheorem{claim}{Claim}
\author{Emily Cheng and Francesca Franzon}
\title{Principles of semantic and functional efficiency \\in grammatical patterning}
\date{\small{
orcid: \href{https://orcid.org/0000-0003-3209-688X}{0000-0003-3209-688X} - \href{https://orcid.org/0000-0003-0503-2792}{0000-0003-0503-2792} \\
\url{emilyshana.cheng@upf.edu} - \url{francesca.franzon@upf.edu}\\
\vspace{2mm}
\emph{Department of Translation and Language Sciences\\Universitat Pompeu Fabra\\Barcelona}
}
}
\begin{document}

\maketitle
\begin{abstract}
Grammatical features such as number and gender serve two central functions in human languages. While they encode salient semantic attributes like numerosity and animacy, they also offload sentence processing cost by predictably linking words together via grammatical agreement.
Grammars exhibit consistent organizational patterns across diverse languages, invariably rooted in a semantic foundation—a widely confirmed but still theoretically unexplained phenomenon. To explain the basis of universal grammatical patterns, we unify two fundamental properties of grammar, semantic encoding and agreement-based predictability, into a single information-theoretic objective
under cognitive constraints, accounting for variable communicative need. Our analyses reveal that grammatical organization provably inherits from perceptual attributes, and our measurements on a diverse language sample show that grammars prioritize functional goals, promoting efficient language processing over semantic encoding.\\
\linebreak
\noindent
\textbf{Keywords}: encoding efficiency 
$|$ grammatical organization 
$|$ meaning transmission
$|$ communicative need 
$|$ language variation
\end{abstract}
\vspace{6mm}

\begin{multicols}{2}
Languages can transmit any meaning through words and their combinations, yet remain feasible to store and process.
Several studies indicated how linguistic structure underlies efficiency in organizing meanings~\cite{Kemp_Regier_2012,Zaslavsky_Kemp_Regier_Tishby_2018} and sustaining processing \cite{futrell2015large,ferrer2020optimal,piantadosi2011word,blevins2017zipfian,caplan2020miller}.
Here, we show how these two aspects interact in shaping languages' grammars, as well as their use in single communicative instances.
Central to linguistic structure, grammars enable both meaning encoding and sentence parsing as essential functionalities \cite{greenberg1963some, corbett1991gender, corbett2000number, corbett2012features}. 
For example, the grammatical plural value in `these cats are sleeping' means that more than one cat is involved; at the same time, it traces the related words in the sentence, `these' and `are', which also occur in the plural due to agreement.

A key question is how grammatical organization reflects the joint need to support semantic encoding and sentence parsing, roles that have been investigated separately, but never in a unified framework.
We address this question building on the functionalist approach to linguistics~\cite{futrell2023information,tishby2000information}, modeling how language structure reflects communicative goals and constraints. We unify meaning transmission and processing goals within an information-theoretic model, showing that grammars are shaped to optimally trade off encoding benefits and costs, and formalizing how communicative need modulates this trade-off both in the overall grammatical organization and in the use of grammatical values. 
Our formalization explains cross-linguistic similarities observed across grammars and related word occurrence patterns, and provides a new interpretative framework to understand encoding optimization in language variation and change.

\subsubsection*{Grammar encodes meaning}
Across living and historical languages of all families, grammars consistently encode information about perceptually or culturally salient aspects of the referents~\cite{golston2018phi, strickland2017language, franzon2019non}. Number, a quasi universal, encodes \emph{semantic} attributes concerning the numerosity of referents, mostly differentiating singular and plural values~\cite{corbett2000number, franzon2019non}; gender encodes some constant properties of referents, such as animacy or sex, including values like animate - inanimate; masculine - feminine - neuter~~\cite{dixon1979ergativity, Silverstein1986, corbett1991gender}. 
Across languages, most grammatical systems comprise strikingly similar sets of values~\cite{greenberg1963some, corbett1991gender, corbett2000number, grambank_release, wals-30, wals-34}.\footnote{Here, we define a value not based on its form but on its role in grammatical agreement. The form that values can assume is variable. Even within languages, the same grammatical value can be marked in more or less consistent ways~\cite{pescuma2021formfunction, plag2024german}, for example German plurals show several different suffixes (i.e., -Ø, -e, -er, -n, -s) \cite{regel2019processing}. 
Also, some languages mostly mark gender and number in a single affix, e.g, in Italian gatt-i[Masc, Plur], 'cats'. Other languages mark them in distinct affixes, e.g. in Spanish gat-o[Masc]-s[Plur], 'cats') \cite{finkel2007principal}. 
Properties of form have been linked to effort minimization constraints in learning and processing~\cite{blevins2017zipfian, Mollica_Bacon_Zaslavsky_Xu_Regier_Kemp_2021, pescuma2021formfunction} but here we focus on the values (e.g., plural) independently of their form variations (the suffixes used for plural)} 

Previous research has explained these cross-linguistic similarities to result from optimal meaning encoding, framed within an information-theoretic approach to communication~\cite{Futrell_Hahn_2022,gibson2019efficiency,shannon1948mathematical}. 
The core assumption is that communication aims to transmit meanings unambiguously~\cite{Mollica_Bacon_Zaslavsky_Xu_Regier_Kemp_2021,Kemp_Regier_2012,Zaslavsky_Kemp_Regier_Tishby_2018}.

In successful communication, a sender encodes an intended meaning into a symbol, and a receiver decodes that symbol to recover the original meaning. An efficient set of symbols will achieve optimal \emph{compression}, namely comprise enough symbols to precisely distinguish the possible meanings a sender can speak about (that is, the benefit), while also minimizing the cognitive load of memorizing many symbols (that is, the cost). Different solutions to this tradeoff have explained the organization of existing sets of symbols, for example, color and kinship terms in the lexical domain~\cite{Futrell_Hahn_2022, gibson2019efficiency, Kemp_Regier_2012, Zaslavsky_Kemp_Regier_Tishby_2018}, as well as grammatical systems encoding semantic attributes such as tense, number, and evidentiality~\cite{Mollica_Bacon_Zaslavsky_Xu_Regier_Kemp_2021}.

These approaches focus on finding an optimal inventory of symbols given a set of meanings (e.g., color terms given color perception \citep{regier2007color}, or grammatical number values given numerical perception~\citep{Mollica_Bacon_Zaslavsky_Xu_Regier_Kemp_2021}), while not considering how such symbols are used in actual instances of communication. The underlying assumption is that the encoding is fixed and deterministic: once a meaning is mapped to a symbol, that symbol will consistently be used in every communicative instance, to maximize successful decoding by the receiver. This seems trivial in some lexical domains: a speaker would unlikely use the word `blue' to describe an instance of what is categorized as `red'. 
But are grammatical values used this way? 

\begin{figure*}[t]
\centering
    \includegraphics[width=\textwidth]{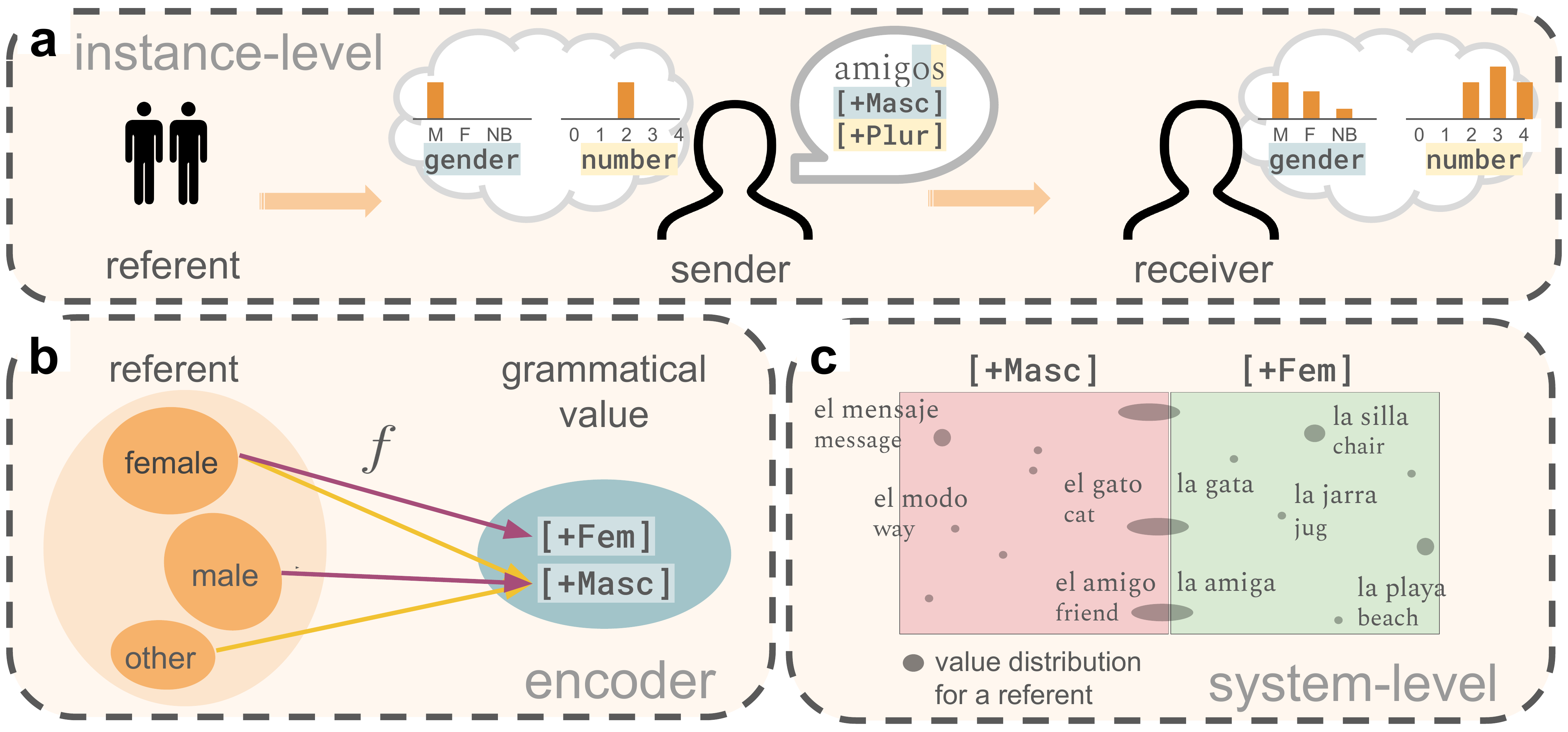}
    \caption{\small{\textbf{(a)} The word-level objective. A \emph{sender} (middle) observes an instance of referent (left) $x=$friend with semantic attributes $A_{\text{gender}}=$\emph{+male} and $A_{\text{number}}=$\emph{+two}. In Spanish, the sender utters the word `amigos' using the grammatical masculine [+Masc] and plural [+Plur]. Hearing the utterance, the receiver (right) attempts to reconstruct the original meaning $A_{\text{gender}}=$\emph{+male} and $A_{\text{number}}=$\emph{+two}, shown as a posterior distribution over the attributes $A$ given the uttered grammatical values. \textbf{(b)} A closer look at the sender. In the example, a referent such as \emph{friend} (left) takes on values in $\{$\emph{+female, +male, other}$\}$; each point in the space is an instance of \emph{friend}. The sender compresses the input distribution of referent attributes via encoder $f$ to, e.g., two grammatical values [+Fem] and [+Masc]. \textbf{(c)} An example of the system-level view for a language with masculine-feminine grammatical gender. Each ellipse denotes the distribution over grammatical values for a given referent over all of its occurrences (tokens). For instance, `el modo' (way, left edge) takes [+Masc] 100\% of the time, and `el gato' (cat, center) takes [+Masc] with probability $p$ and [+Fem] with probability $1-p$. Overall, probability mass over tokens is evenly distributed between [+Masc] and [+Fem], permitting optimal agreement-based discriminability. The distribution for each referent lies in one region or straddles both according to its individual word-level objective, taking either one value (`el modo') or multiple values (`el gato' [+Masc], `la gata' [+Fem]).}}
    \label{fig:keyfig}
\end{figure*}

The mapping between grammatical values and semantic attributes instead looks inconsistent in actual language use~\cite{bybee1988morphology, matthews1991morphology, haspelmath2017explaining, haspelmath2021explaining}. 
A typical example comes from Romance languages, like Spanish, where grammatical gender values (masculine vs.~feminine, from now on [Masc] vs.~[Fem]) mark sex-related attributes in some animate nouns.
While a feminine noun like `gatas' (female cats) always denotes female referents, the masculine `gatos' (cats) can be used for both male and non-male referents, for instance in mixed-sex groups \cite{kramer2020grammatical}. 
Different theories account for this asymmetry, either sustaining or contradicting the principle of unambiguous encoding. 
According to some accounts, the masculine gender value [Masc] in nouns like `gatos' is fully \emph{specified}, denoting either the presence of at least one male in a group or representing the referent as prototypically male. 
In this case, occurrences of a [Masc] value would correspond to presence of a \emph{+male} semantic attribute \cite{gygax2021masculine}. Then, the uneven distribution of male referents would explain the differences in use~\cite{haspelmath2017explaining, haspelmath2021explaining}. 
Another option is that only the feminine value expresses a \emph{+female} semantic attribute, while the masculine refers to entities when it is not true, relevant, or possible to specify they are female \cite{matthews1991morphology,franzon2023entropy}. 
In decoding, unambiguously retrieving the correct semantic attribute from [Masc] would not be possible: the grammatical value would then be \emph{underspecified}. In this case, the presence of a grammatical masculine value would be decoupled from the presence of a \emph{+male} semantic attribute. 
The asymmetry in encoding would depend on the communicative need to convey a specific semantic attribute in that context. 
So far, no quantitative cross-linguistic study has confirmed or refuted underspecification as a fundamental property of grammars. 

\subsubsection*{Grammar supports language processing}
Besides encoding meaning, grammatical values also play a role in language processing by tracking related words through agreement \cite{dye2017functional,franzon2023entropy,steele1978word}.
In particular, a core aspect of language processing is the ability to anticipate upcoming words, especially in comprehension~\cite{kuperberg2016we,ferreira2018integration,huettig2016individual,huettig2022parallel,futrell2020lossy, ryskin2023prediction, shain2024large}. 
As sentences unfold through time, grammatical agreement provides cues to predict these upcoming words: for instance, in a sentence a feminine determiner `la' (Spanish, the) signals that a feminine noun will follow \cite{steele1978word}. 
As experiments in several languages report, listeners use grammatical values to orient expectation of future words \cite{wicha2004anticipating, hopp2016learning, altmann2007real}, even in early stages of language acquisition \cite{fuchs2022facilitative, karaca2024morphosyntactic, parshina2024predictive}.

Linking words via agreement, or grammar's \emph{functional} role, is what guides listener expectations in predictive processing. 
As such, the distribution of grammatical values across different words has implications for processing ease~\cite{franzon2023entropy,dye2017functional}. For instance, if all nouns in Spanish were grammatically feminine, then seeing a feminine determiner `la' would give no discriminative benefit for the upcoming noun. In contrast, if feminine and masculine values are distributed uniformly (50-50) across nouns, then on average, seeing `la' halves the set of potential upcoming nouns, making prediction easier. 

Crucially, the functional role of grammatical values concerns all words in the lexicon and operates independently of their semantic content. For instance, the Spanish noun `silla' ([Fem, Sing], chair) is feminine without referring to a female entity. Here, feminine is only a \emph{formal} value which encodes no meaning. While formal values are sufficient for sustaining grammatical agreement, no languages have a grammatical system composed only of formal values: whenever a grammatical feature exists in a language, there are always words in the lexicon for which it is semantically interpreted~\cite{corbett1991gender, corbett2000number, grambank_release}. As such, the set of grammatical values used for functional purposes never exceeds the set of values that are semantically interpreted. Although this property is a known linguistic universal, its foundations are still unknown. 

\subsection*{Contributions}
Since semantic and functional aspects necessarily coexist in all known grammatical systems, a full account of grammatical organization must address both. We contribute a theoretical framework that captures their interaction in shaping grammar and its use. 
Our formalization predicts grammatical values to (1) always inherit from semantic categories and (2) exhibit semantic underspecification. 
We compare our analytical predictions with numerical simulations and large-scale data from 12 diverse languages, finding that, in reality, languages do not maximize semantic encoding but rather prioritize functional goals. 
Our framework applies to any grammatical feature, but in what follows, we focus on gender and number, which are widespread across all language families~\cite{corbett1991gender, corbett2000number, grambank_release, wals-30, wals-34}. 

\subsection*{An information-theoretic model of grammatical organization} 
We explain the empirical distribution of grammatical values as a function of communicative goals and constraints. To do this, we model the organization of grammatical systems as a multi-level optimization problem, 
from which we later derive grammatical universals.
 
Grammatical features need to satisfy semantic and functional goals. Semantically, they need to encode attributes of a referent, in a consistent way across different referents. For example, the grammatical feminine [+Fem] for any referent, like a cat (`gata'), conveys that the cat is female; ideally, the same [+Fem] consistently means \emph{+female} for words denoting other referents, like a friend (`amiga').
Now considering grammar's functional role, the distribution of grammatical values across different words has implications for processing ease. For instance, if all nouns in Spanish were grammatically masculine, then seeing a masculine article `el' (`the') would give no predictive benefit for the upcoming noun. In contrast, if the grammatical masculine and feminine are distributed 50-50 across nouns, then on average, seeing `el' halves the set of potential upcoming nouns, making prediction easier. 

We see, then, that grammatical systems are organized on two levels: grammatical values need to distribute probability mass over \emph{instances} of a given word, and across different \emph{words} in the lexical system. As such, we propose an objective, Eq.~\eqref{eq:objective}, at \emph{two scales}, the word level and the system level. 
In this section, we derive Eq.~\eqref{eq:objective}; then, in later sections, modeling languages as optimizing Eq.~\eqref{eq:objective}, we theoretically derive properties of the grammatical value distribution that are empirically attested in the world's languages.

\subsection*{Communication model}
Our model of communication shown in \cref{fig:keyfig}a is based on Shannon's communication channel \cite{shannon1948mathematical}, further detailed in the Methods. 
We assume a deterministic \emph{sender} that observes \emph{an instance of a referent} and utters a noun, mapping the referent's relevant semantic attributes into grammatical values. 
For example, in \cref{fig:keyfig}a, the sender encodes \underline{two} \underline{male} friends into the Spanish word `amigos'~[\underline{Masc}, \underline{Plur}]. For a receiver of this signal (\cref{fig:keyfig}a right) to recover a sufficient approximation of the original meaning \emph{two male friends}, 
the word `amigos' needs to be informative about the numerosity and gender of the friends. That is, given the word `amigos'~[Masc, Plur], the semantic attributes of the friends (\emph{male}, \emph{two}) should have low \emph{surprisal}, that is, the meaning should be reasonably decodable from the utterance.  
As is standard in the literature \citep{Zaslavsky_Kemp_Regier_Tishby_2018,Mollica_Bacon_Zaslavsky_Xu_Regier_Kemp_2021,gibson2019efficiency}, for simplicity, we model a language as the sender's meaning-to-signal encoding.

The sender may need to encode diverse referents (dog vs.~cat) in different instantiations (cat vs.~cats). We model the referent that the sender encodes in a single communicative instance as a random variable $X$ that follows the probability distribution $p_X(x)$. Here, $x$ is a possible realization of the referent (e.g.,~\emph{cat}), and $\mathcal X$ the set of all possible referents.

Each referent is defined by a set of $k$ independent, or orthogonal, \emph{semantic features} $\mathbf{A} = \{A_i\}_{i=1}^k$ (e.g., numerosity, sex). 
Because the $\{A_i\}_{i=1}^k$ are orthogonal, we model them separately, where each feature $A_i$ varies according to a categorical distribution $p_{A_i|X=x}$ over $|A_i|$ possible \emph{semantic attributes} 
(e.g., the attribute female for the semantic feature of sex).\footnote{The set of features and attributes in $\mathbf{A}$, as well as their distributions among referents, can vary across environmental and cultural contexts. Here we focus on the categorizations most commonly found in natural languages \cite{grambank_release}, and examine how their possible distributions relate to semantic encoding. Encoded attributes—particularly gender—are not necessarily constrained to binary or ternary distinctions of sex, as seen in many languages \cite{corbett1991gender}. While we use this simpler case for illustration, the model is flexible and can be extended to other semantic features. Importantly, the distribution of referents, as seen by speakers, does not necessarily mirror real-world distributions in a strict way and can be filtered through perception. For example, a speaker may not know the sex of the cat they are referring to—in such cases, the encodable attribute would be 'other'. It is not possible to realistically measure this aspect, so we do not consider it here.}
We model the sender as an \emph{encoder} $f$ mapping the referent's semantic features $\mathbf{A}$ to $k$ corresponding \emph{grammatical features} $\mathbf{W_A} = \{W_{A_i}\}_{i=1}^k$ (see \cref{fig:keyfig}b for an illustration of $A=$ gender). 
Each $W_{A_i}$ follows a categorical distribution over $|W_{A_i}|$ \emph{grammatical values}. While the overall grammatical system is all grammatical features $\mathbf W_A$, we restrict our analysis to each $W_A$ separately due to orthogonality.

The distribution of grammatical values fully depends on the meaning distribution $p_X$, the attribute distributions $p_{A|X}$, and encoder $f$. Our goal is to explain the distribution of $W_A$ via $f$, if $f$ satisfies certain properties for efficient communication.\footnote{Unlike prior work \cite{Zaslavsky_Kemp_Regier_Tishby_2018,Mollica_Bacon_Zaslavsky_Xu_Regier_Kemp_2021}, we assume lossless source encoding; that is, we do not model variation in the sender's perception of the referent, due to the impossibility to disentangle the true and perceived distribution of semantic attributes given the referents and words used to refer to them.}

\subsection*{Objective}
We now motivate the objective in \eqref{eq:objective}, considering the properties $f$ needs to satisfy at the level of the communicative instance, then at the level of the system. \eqref{eq:objective} is a multi-level optimization problem that should be read bottom-up as follows. We start with the set $\mathcal F$ of all possible encoding strategies $f$. First, the instance-level optimization in \eqref{eqn:word_level} is solved, cutting candidate encoders to a subset $\mathcal G \subset \mathcal F$. Finally, the system-level optimization problem in Eqs.~(\ref{eq:system_level}) and (\ref{eq:consistency}) is solved to yield the optimal solutions $f^*$. In what follows, we derive \eqref{eq:objective} on a high level. Our formulation relies on concepts from information theory \citep{shannon1948mathematical} such as Shannon entropy $\mathcal H(\cdot)$; see Materials and Methods for formal definitions. For further mathematical details, see the Appendix.

\begin{tcolorbox}[title=Grammatical organization objective, colframe=blue!60!black, colback=blue!10,left=0mm, top=0mm]
\begin{subequations}\label{eq:objective}
\begin{align}
\underset{f}{\text{max}} & \qquad \underset{\text{discriminability}}{\mathcal H(W_A)}  \label{eq:system_level}\\
\text{s.t.} 
& \qquad f \in \underset{{f\in \mathcal G}}{\text{argmin}} \left \{\underset{\text{size}}{|W_A|} + \underset{\text{consistency}}{\beta \mathcal H (W_A | A)} \right \} \label{eq:consistency} \\
\text{where} \nonumber \\
\mathcal G = & \underset{x \in \mathcal X}{\bigcap} \underset{f \in \mathcal F}{\text{argmin}} \left\{ \underset{\text{memory}}{\mathcal H(W_A|X\text{=}x)}+\alpha_x \underset{\text{surprisal}}{\mathcal H(A|W_A,X\text{=}x)} \right\} \label{eqn:word_level} \\
& \qquad f: A \mapsto W_A \label{eqn:f_definition}\\
&  \qquad \beta \geq 0 \label{eqn:beta} \\
& \qquad \alpha_x \geq 0 \ \ \forall x \in \mathcal X \label{eqn:alpha} 
    \end{align}
\end{subequations}
\end{tcolorbox}

\subsubsection*{Semantic encoding and compression at the instance level} We first consider the instance-level objective. Here, $f$ needs to encode information about the semantic features of the instance of a referent. At an extreme, $f$ can maximally encode all information into grammatical values. This results in low surprisal of the semantic attribute $A$ given grammatical value $W_A$, but this strategy incurs a high cost.

This memory-surprisal tradeoff \cite{Futrell_Hahn_2022} formalizes Zipf's Principle of Least Effort (PLE) \cite{zipf1949human}, which states that, all else equal, one chooses the least costly strategy to solve a problem. This tradeoff takes place for each noun, given by \eqref{eqn:word_level}. For a given $X=x$ (e.g., `cat'), for each semantic feature $A$ (e.g., \emph{sex}), the encoding $f$ minimizes the cost $\mathcal H(W_A|X\text{=}x)$
while maintaining low average surprisal $\mathcal H(A| W_A, X\text{=}x)$ (hereon \emph{surprisal}), see Materials and Methods for details. 
Keeping surprisal low reflects a \emph{communicative goal}: a sender says $W_A$ so that a receiver can approximately decode it back into $A$. On the other hand, minimizing cost is a \emph{cognitive goal}: the sender wants to minimize effort to store and transmit information. The parameter $\alpha_x \in \mathcal A = [0, \infty)$ modulates this tradeoff for each referent $x$, proxying communicative need. The higher $\alpha_x$ is, the more important it is to communicate precisely about $x$. We detail in a later section that $\alpha_x$ determines whether $A$ is fully, partially, or not encoded into grammar for the referent $x$.

\subsubsection*{Semantic consistency at the system level} 
Because the grammatical system needs to be easy to use, grammatical values  span a small set of values and correlate across words. Otherwise, a viable system could have, for instance, $N$ separate values to encode plural, one for each of $N$ referents; such a system would be semantically unambiguous but overly complex. The pressure for consistency instead superimposes a global coherence \emph{across} referents.
The global consistency between semantic attribute and grammatical value enters as the constraint \eqref{eq:consistency} in \eqref{eq:objective}, modulated by parameter $\beta \geq 0$ (\eqref{eqn:beta}). While at the instance level we minimize surprisal of the semantic attribute $A$ given $W_A$, at the system level we minimize the entropy of the grammatical value $W_A$ given $A$. To illustrate the difference, the former requires that the grammatical singular be the same in all instances of $x=$ `cat'; the latter requires that the singular be the same for both `cat' and `chair'. 

At the same time, the size of the system $|W_A|$, namely the number of values it comprises, needs to be manageable. Again realizing Zipf's PLE, this is written as a size cost in \eqref{eq:consistency}. 

\subsubsection*{Discriminability at the system level} 
Finally, grammatical values serve a functional role, aiding agreement-based discriminability \cite{dye2017functional, franzon2023entropy}. \eqref{eq:system_level} expresses that an encoder that optimizes for agreement-based discriminability distributes grammatical values uniformly across the lexicon, maximizing entropy $\mathcal H(W_A)$ of the system. Thanks to grammatical agreement, the max-entropy distribution over grammatical values optimally reduces competition over the next word in online processing.

To see why this is the case, consider as an agreement-based discriminability measure {Agr}D, defined as equal ($\triangleq$) to the proportion of competitors cut off: 
\begin{equation}
\label{eq:agrd}
    \text{Agr}D \triangleq 1 - \sfrac{1}{2^{\mathcal H(W_A)}}, 
\end{equation}
where higher entropy $\mathcal H(W_A)$ means greater discriminability ($\text{Agr}D \to 1$), and $0$ entropy means no discriminability ($\text{Agr}D \to 0$)~\cite{franzon2023entropy}. This pressure favoring agreement-based discriminability is given in \eqref{eq:objective} by the entropy term $\mathcal H(W_A)$ over grammatical values.

Note that the instance level and system level objectives each contain an entropy term that is respectively minimized and maximized. There is a key difference between minimizing $\mathcal H(W_A|X=x)$ at the instance level (\eqref{eqn:word_level}) and maximizing $\mathcal H(W_A)$ at the system level (\eqref{eq:system_level}). Minimizing entropy given $X=x$ concentrates probability mass on one grammatical value, meaning that all the instances of a word denoting a referent can take only one grammatical value. Meanwhile, at the system level, maximizing entropy evenly disperses probability mass across the lexicon. This distinction is depicted in \cref{fig:keyfig}c: in Spanish, most referents' grammatical value distributions (ellipses) concentrate to either [Masc] or [Fem], but overall probability mass across referents distributes evenly between the two grammatical genders.

\section*{Deriving semantic universals of grammatical systems}
From the minimal set of cognitive pressures given in \eqref{eq:objective}, it is possible to derive known universal relationships between semantic attributes and grammatical encoding. We assume that languages are optimal with respect to \eqref{eq:objective} for some $\mathbf{\alpha}$, $\beta$. Then, we derive theoretical bounds regarding the distribution of grammatical values, showing that (1) all grammatical values are semantically interpretable in at least one part of the lexicon; (2) the memory-surprisal tradeoff underlies the extent of semantic underspecification. This section only contains theoretical results; we validate each one with numerical simulations, see \emph{Validating Analytic Results with Simulations} in the Appendix.

\subsection*{Grammatical values provably inherit from semantics}
In every language, grammatical values serve a functional role. At the same time, they also encode a semantic attribute at least in some part of lexicon \cite{wals-32,corbett2012features}.  
As functional aspects operate independently from semantics, it is still unknown why grammatical values universally \emph{inherit} from a semantic foundation, whether that be numerosity, animacy, or other properties. In this section, we formally define semantic inheritance in grammatical systems and show that this observed universal mathematically arises when languages are optimal with respect to~\eqref{eq:objective}. 

What do we mean by \emph{semantic inheritance}? We say a grammatical feature $W_A$ inherits from semantic attribute $A$ if, for some part of the lexicon, $W_A$ is \emph{informative} about $A$. An example of this is grammatical gender in Romance or Slavic languages, where for some animate nouns, grammatical gender ($W_A$) marks sex-related attributes ($A$). We state this definition below in information-theoretic terms:

\begin{definition}[Semantic inheritance]
\label{def:inheritance}
    Let $W_A$ be some grammatical feature. $W_A$ \emph{inherits} from semantic attribute $A$ if $\exists$ non-empty $\mathcal X_{A} \subset \mathcal X$ for which $\mathcal H(A\mid W_A,x\in \mathcal X_A) < \mathcal H(A \mid x\in\mathcal X_A)$.
\end{definition}

In general, for two random variables $A$ and $B$, conditional entropy $\mathcal H(A \mid B) \leq \mathcal H(A)$, that is, learning about $B$ will increase or maintain our knowledge about $A$ on average \cite{cover_thomas}. \Cref{def:inheritance} instead requires that knowing the grammatical feature $W_A$ \emph{strictly increase} our knowledge about $A$. If languages are optimal with respect to \eqref{eq:objective}, then their grammatical features necessarily display semantic inheritance, yielding our first result: 

\begin{theorem}[Grammatical values inherit from semantics]
\label{thm:inheritance}
    If $f$ optimizes \eqref{eq:objective}, then each $W_A$ either inherits from some $A$ or consists of a single neutralized value for the whole lexicon.
\end{theorem}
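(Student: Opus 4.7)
The plan is to prove the contrapositive: assuming $f$ optimizes \eqref{eq:objective} and $W_A$ does \emph{not} inherit from $A$ in the sense of \Cref{def:inheritance}, I will show that $|W_A|=1$. The core idea is that non-inheritance drains every grammatical value of semantic content for every referent, so replacing $f$ by a constant encoder $f'$ preserves the instance-level minimum in \eqref{eqn:word_level} while strictly lowering the system-level constraint functional in \eqref{eq:consistency}; this forces $f$ out of the argmin defining \eqref{eq:consistency}, contradicting its feasibility under the bi-level structure of \eqref{eq:objective}.

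The first step is to pass from the set-quantified definition of non-inheritance to a pointwise statement. Since conditioning cannot increase entropy, $\mathcal H(A\mid W_A, x\in\mathcal X_A)\leq \mathcal H(A\mid x\in\mathcal X_A)$ always holds, so the negation of \Cref{def:inheritance} is equivalent to equality for every non-empty $\mathcal X_A\subset\mathcal X$; specializing to singletons yields $W_A \perp A \mid X=x$ for every $x$. Because $f:A\mapsto W_A$ is deterministic by \eqref{eqn:f_definition}, a random variable that is simultaneously a function of $A$ and independent of $A$ given $X=x$ must be almost-surely constant on $\mathrm{supp}(p_{A\mid X=x})$. Hence, under $f$, each referent already maps to a unique grammatical value, and $\mathcal H(W_A\mid X=x)=0$ and $\mathcal H(A\mid W_A, X=x)=\mathcal H(A\mid X=x)$ for every $x$.

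Next, I introduce the competitor $f'$ that sends every $a\in\mathrm{supp}(A)$ to a single fixed value $w^\star$. For every $x$, $f'$ also attains $\mathcal H(W_A\mid X=x)=0$ and $\mathcal H(A\mid W_A, X=x)=\mathcal H(A\mid X=x)$, so the instance-level objective in \eqref{eqn:word_level} takes exactly the same per-$x$ value under $f'$ as under $f$; in particular, $f'\in\mathcal G$. At the system level, however, $f'$ yields $|W_A|+\beta\,\mathcal H(W_A\mid A)=1$, while $f$ yields at least $|W_A|\geq 2$. This strict gap shows $f\notin\operatorname{argmin}_{g\in\mathcal G}\{|W_A|+\beta\,\mathcal H(W_A\mid A)\}$, violating \eqref{eq:consistency}, so $f$ cannot have optimized \eqref{eq:objective} after all. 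The conclusion $|W_A|=1$ is precisely the ``single neutralized value'' alternative in the theorem statement.

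I expect the only delicate step to be the support-constancy claim used in the second paragraph: a deterministic function of $A$ that is independent of $A$ conditional on $X=x$ must be constant on $\mathrm{supp}(p_{A\mid X=x})$. The short argument observes that, by determinism, $P(W_A=w\mid A=a, X=x)=\mathbb{1}[f(a)=w]$ on atoms $a$ in the support, and independence forces this indicator to be constant in $a$ for each $w$, which is possible only if exactly one value of $w$ is ever attained on the support. Once this is in place, the rest of the proof reduces to a direct numerical comparison of the two sides of \eqref{eq:consistency}; the top-level maximand $\mathcal H(W_A)$ in \eqref{eq:system_level} never enters the argument, since the contradiction arises already at the constraint level.
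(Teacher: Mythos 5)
Your proof is correct, but it runs in the opposite direction from the paper's. The paper argues directly by cases on the communicative-need parameters: it asserts that $\alpha_x>0$ forces $\mathcal H(A\mid W_A,X{=}x)<\mathcal H(A\mid X{=}x)$ at the instance level, averages this over $\mathcal X_A=\{x:\alpha_x>0\}$ to get inheritance, and in the all-$\alpha_x{=}0$ case invokes the consistency term in \eqref{eq:consistency} to glue the per-referent neutralized values into one. You instead take the contrapositive and never mention $\alpha_x$: non-inheritance plus conditioning-reduces-entropy gives conditional independence $W_A\perp A\mid X{=}x$ for every $x$, determinism of $f$ (\eqref{eqn:f_definition}) upgrades that to per-referent constancy, and an exchange argument with the globally constant competitor $f'$ (which matches $f$ instance-wise, hence lies in $\mathcal G$, but scores $1<|W_A|$ on the simplicity functional) forces $|W_A|=1$. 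Your route buys robustness: it sidesteps the paper's load-bearing claim that $\alpha_x>0$ implies strict informativeness, which is delicate --- it fails outright when $p_{A\mid X=x}$ is degenerate (the paper itself concedes this for inanimates elsewhere), and under the deterministic-encoder reading the instance-level objective reduces to $\mathcal H(A\mid X{=}x)+(\alpha_x-1)\mathcal H(A\mid W_A,X{=}x)$, whose minimizer is the constant encoder for all $0<\alpha_x<1$. The one place your argument leans on a modeling choice is the support-constancy step: if the encoder is read as stochastic (as the paper's Figure 1c and simulations suggest, despite \eqref{eqn:f_definition}), determinism is unavailable, but the fix is one line --- any non-constant $W_A$ independent of $A$ given $X{=}x$ pays strictly positive memory in \eqref{eqn:word_level} for zero surprisal gain over the constant encoder, so it cannot lie in $\mathcal G$. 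What the paper's version buys in exchange is an explicit link between the dichotomy in the theorem and the interpretation of $\alpha_x$ as communicative need, which your proof deliberately ignores.
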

\begin{proof}
    See Appendix.
\end{proof}

\Cref{thm:inheritance} states that semantic inheritance follows from optimizing \eqref{eq:objective}. But, beyond $W_A$ simply encoding $A$ in some part of the lexicon (\Cref{thm:inheritance}), we can further show that $|W_A| \leq |A|$, or that there are never more grammatical values than perceptual categories for a given semantic attribute. For instance, if we perceive $A=\text{gender}$ to vary over $\{+male, +female, \o\}$, then our theory predicts \emph{at most} three grammatical gender values. 
To prove this, we begin by showing the number of grammatical values are constrained by semantic attributes, separately for each referent (\Cref{lemma:num_feats}). Then, given \Cref{lemma:num_feats}, we show that the same statement holds for the entire lexicon (\Cref{thm:num_feats}).

\begin{lemma}[Number of grammatical values given referent]
\label{lemma:num_feats}
    An $f^*$ optimizing \eqref{eq:objective} has
    $$|W_{A|X=x}| \leq |A_{X=x}|.$$
\end{lemma}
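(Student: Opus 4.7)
The plan is to exploit the fact that the encoder, per equation (\ref{eqn:f_definition}) and the ``deterministic sender'' assumption, is a function $f: A \mapsto W_A$. Under this interpretation, $W_A$ is a deterministic function of $A$. Conditioning on $X=x$ restricts $A$ to its support $A_{X=x} = \{a : p_{A\mid X=x}(a) > 0\}$, and functional composition restricts $W_A \mid X=x$ to the image $f(A_{X=x})$. Since taking the image of a set under a function cannot inflate its cardinality, $|W_{A|X=x}| = |f(A_{X=x})| \leq |A_{X=x}|$ follows immediately.

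Concretely, I would (i) fix an arbitrary referent $x\in\mathcal X$ and semantic feature $A$; (ii) write the conditional law of $W_A$ given $X=x$ as the pushforward of $p_{A\mid X=x}$ under $f$; (iii) identify its support with $f(A_{X=x})$; and (iv) apply the elementary cardinality bound $|f(S)|\leq |S|$ at $S=A_{X=x}$.

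The role of the optimizer $f^*$ in the statement is to justify reading $|W_{A|X=x}|$ as the effective alphabet size---the number of distinct grammatical values actually produced for instances of $x$---rather than an a priori codomain. At any optimizer, a nominally present but unused grammatical value would only inflate the alphabet-cost term $|W_A|$ in (\ref{eq:consistency}) without benefiting any other term in the objective, and so can be pruned without loss. Under this reading, the functional image bound applies directly to the effective alphabet, and the main obstacle is purely notational. The only place where more substantial work would be required is if one relaxed the deterministic-sender assumption and allowed a stochastic $f$: an additional step showing that optimizers can be taken deterministic would then be needed, which follows from a concavity argument in $p(w\mid a)$ for $\alpha_x \leq 1$ (extremal points of a product of simplices are deterministic assignments), and from a merging-style argument---replacing two $w$-values with identical posteriors $p(A\mid W=w,X=x)$ by their union, which strictly decreases $\mathcal H(W\mid X=x)$ while leaving $\mathcal H(A\mid W,X=x)$ unchanged---in the general case.
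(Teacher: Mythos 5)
Your main argument reads $f$ in \eqref{eqn:f_definition} as a set-theoretic function, under which $|W_{A|X=x}| = |f(A_{X=x})| \leq |A_{X=x}|$ holds for \emph{every} encoder, optimal or not. That is not the reading the paper intends, and it empties the lemma of its content: the paper's own proof explicitly allows one semantic attribute to map to several grammatical values (it writes $\mathcal W_{A|X=x} = \bigcup_{a} \mathrm{im}_f(a)$ with $|\mathrm{im}_f(a)|$ possibly greater than $1$), and it motivates the memory and size terms precisely as the pressures that eliminate such synonyms. So the case you defer to the end of your proposal---a one-to-many, effectively stochastic $f$---is the entire point of the lemma, and your primary argument does not engage with it. (Also, the paper establishes this lemma from membership in $\mathcal G$ alone, i.e., from the instance-level objective \eqref{eqn:word_level}; your appeal to the system-level size term in \eqref{eq:consistency} to prune unused values is not needed and not how the paper reads $|W_{A|X=x}|$, which is simply the support size of $W_A$ given $X=x$.)

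That said, your contingency plan for the one-to-many case is essentially a valid proof, and a genuinely different one from the paper's. The concavity argument is correct and needs no restriction to $\alpha_x \leq 1$: both $\mathcal H(W_A|X\text{=}x)$ (the entropy of a linear image of the encoder) and $\mathcal H(A|W_A,X\text{=}x) = \mathcal H(A|X\text{=}x) - I(A;W_A|X\text{=}x)$ (a constant minus a function that is convex in $p(w|a)$ for fixed $p_{A|X=x}$) are concave in the encoder, so for every $\alpha_x \geq 0$ their nonnegative combination attains its minimum at an extreme point of the product of simplices, i.e., at a deterministic assignment, from which the cardinality bound is immediate. Your merging step alone would not close the argument (two values emitted for the same $a$ need not share a posterior), and neither your sketch nor the paper's proof establishes that \emph{every} optimizer, rather than some optimizer, is synonym-free. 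The paper instead proves an equality $|W_{A|X=x}| = |A_{X=x}|$ in the regime of strict surprisal minimization---zero surprisal forces the reverse map from values to attributes to be a function, and memory minimization then collapses each $\mathrm{im}_f(a)$ to a singleton---and obtains the inequality by relaxing the surprisal constraint so that several attributes may share a value. Your concavity route is arguably tighter than that relaxation step; if you promote it to the main argument and drop the deterministic-$f$ framing and the $\alpha_x \leq 1$ caveat, you have a complete proof.
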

\begin{proof}
    See Appendix.
\end{proof}

We illustrate Lemma \ref{lemma:num_feats} by considering two examples from a common gender system \cite{corbett1991gender}. In our examples, $A_{\text{gender}}$ varies over $\{+male, +female, \o\}$, corresponding to male, female, and absence of biological sex or gender.\footnote{Gender systems can also encode other values, including culturally defined categories \cite{corbett1991gender}. As the perceptions of gender and knowledge of biological sex difference evolve, so does the set of possible categories in a system. We leave a deeper study of this phenomenon to future work and focus here on the distinctions supporting the most frequent systems found in languages~\cite{wals-31}.}
First, let $x$ be `chair', which has a biological sex of $\o$; $|A_{X=\text{chair}}| = 1$, and our model correctly predicts for Spanish \emph{silla} to take on $|W_{A|X=\text{chair}}| \leq 1$ grammatical values if $f$ optimizes \eqref{eq:objective}. Second, if $x$ is ``cat'', whose biological sex takes values in $A_{X=\text{cat}} = \{+male, +female\}$, our model correctly predicts the word \emph{gato} to inflect for a maximum of $|W_{A|X=\text{cat}}| \leq 2$ grammatical values. An example for which the inequality is strict, that is, $|W_{A|X=x}|<|A_{X=x}|$, happens for numerosity: humans perceive a rich set of numerical distinctions and quantities (\textgreater $4$)
~\cite{dehaene2011number,butterworth2018introduction}, but most languages only grammatically encode $2$ categories, singular and plural.

We just showed that the number of grammatical values is bounded by the number of perceptual categories for each referent. Now, we show that this also holds over the system of referents. Note that the instance level objective alone would permit systems where, e.g., each referent has a unique plural value (``cat"-``cats", ``dog"-``dogss", ``bat"-``batsss"). So, only given the instance level objective, the overall $|W_A|$ can still be greater than $|A|$ when aggregated over all $x$. The system level simplicity term $\min_{f\in \mathcal G} |W_A| + \beta \mathcal H(W_A|A)$ will instead constrain $|W_A| \leq |A|$, yielding the following theorem:

\begin{theorem}[Number of grammatical values for a feature in system]
\label{thm:num_feats}
    If $f^*$ satisfies \eqref{eq:objective}, then the number of values of $W_A = f^*(A)$ is bounded above by the number of values of $A$:
    $$|W_A| \leq |A|.$$
\end{theorem}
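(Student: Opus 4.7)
The plan is to combine the per-referent bound of Lemma~\ref{lemma:num_feats} with the system-level simplicity and consistency pressures $|W_A| + \beta\,\mathcal H(W_A\mid A)$ in \eqref{eq:consistency}. Lemma~\ref{lemma:num_feats} already guarantees that within each referent $x$, the encoder uses at most $|A_{X=x}|$ grammatical values, so there is a per-referent assignment of semantic attributes to grammatical values. The remaining worry is that distinct referents could use disjoint grammatical values for the \emph{same} semantic attribute—e.g.\ \emph{+female} going to some $w_1$ on ``cat'' but $w_2$ on ``dog''—inflating $|W_A|$ beyond $|A|$. My aim is to show that \eqref{eq:consistency} rules exactly this out, so that the full optimum respects $|W_A|\le|A|$.

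I would argue by contradiction via a merging construction. Suppose $f^*$ optimizes \eqref{eq:objective} yet $|W_A|>|A|$. Consider the relation $R\subseteq A\times W_A$ of pairs $(a,w)$ such that $f^*$ maps attribute $a$ on some referent to $w$; a pigeonhole count shows that if each $a$ were associated with at most one $w$ then $|W_A|\le|A|$, so there must exist an attribute $a^\star$ and two distinct values $w_1,w_2\in W_A$ with $(a^\star,w_1),(a^\star,w_2)\in R$. Define $f'$ by relabeling: wherever $f^*$ outputs $w_2$ on an instance with attribute $a^\star$, let $f'$ output $w_1$; elsewhere $f'=f^*$. Then $|W_{A'}|\le|W_A|-1$, and collapsing two conditional-on-$A=a^\star$ outcomes into one gives $\mathcal H(W_{A'}\mid A)\le\mathcal H(W_A\mid A)$, so $f'$ strictly improves the system-level objective \eqref{eq:consistency}. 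To close, I verify $f'\in\mathcal G$: within each referent $x$ the relabeling touches only the branch $A=a^\star$, and by Lemma~\ref{lemma:num_feats} the per-referent assignment is (essentially) injective from $A_{X=x}$ into $W_A$, so $w_1$ is not already in use at $x$ for a different attribute, no collision occurs, and both $\mathcal H(W_A\mid X{=}x)$ and $\mathcal H(A\mid W_A,X{=}x)$ are preserved under renaming. This contradicts the optimality of $f^*$.

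The main obstacle I expect is precisely the injectivity claim inside a single referent: I need Lemma~\ref{lemma:num_feats} not merely to count $|W_{A\mid X=x}|$ but to deliver a \emph{one-to-one} correspondence between attribute values realized at $x$ and grammatical values used at $x$, so that the merging does not inadvertently contract two distinct attribute branches of some referent and drive up its surprisal $\mathcal H(A\mid W_A,X{=}x)$. If the instance-level optimum already neutralizes certain attributes at $x$ (so two attributes could share a grammatical value there), the construction needs a refinement: pick the pair $(w_1,w_2)$ more carefully, e.g.\ require that $w_2$'s preimage under $f^*$ lies entirely in the $A=a^\star$ slice across all $x$, which a finer pigeonhole on $R$ should supply whenever $|W_A|>|A|$. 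Making this selection explicit—and checking it is compatible with the per-referent optima characterized by Lemma~\ref{lemma:num_feats}—is the technical crux; once it is in hand, the strict improvement at the system level closes the argument and forces $|W_A|\le|A|$.
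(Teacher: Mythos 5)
You have correctly isolated the crux, but the repair you propose for it does not work, so the argument has a genuine gap. The problem is the existence of a ``mergeable'' pair. Consider $|A|=2$ with attributes $a_1,a_2$, three referents, and values $w_1,w_2,w_3$ assigned cyclically: at $x_1$, $a_1\mapsto w_1,\ a_2\mapsto w_2$; at $x_2$, $a_1\mapsto w_2,\ a_2\mapsto w_3$; at $x_3$, $a_1\mapsto w_3,\ a_2\mapsto w_1$. Each per-referent map is a bijection onto two values, so this $f^*$ can lie in $\mathcal G$ and satisfies Lemma~\ref{lemma:num_feats}, yet $|W_A|=3>2=|A|$. Here every value's preimage is all of $A$, so your ``finer pigeonhole'' (a $w_2$ whose preimage lies entirely in one attribute slice) fails; merging only the $A=a^\star$ branch of some $w_2$ into $w_1$ never retires a value (so $|W_{A'}|$ does not drop), and any global merge of one value into another collides at some referent (e.g.\ $w_3\to w_1$ sends both attributes of $x_3$ to $w_1$), raising that referent's surprisal and ejecting $f'$ from $\mathcal G$. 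So the local-merge contradiction cannot be closed as sketched: what is needed is not a merge but a global \emph{relabeling} (including swaps) that makes the attribute-to-value assignment consistent across referents.

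That global relabeling is exactly the paper's route, and it avoids the contradiction framing altogether. The paper exhibits a feasible witness: since the instance-level entropies in \eqref{eqn:word_level} are invariant under renaming values within each referent, one can relabel every referent's values to a common attribute-indexed set, obtaining an $f^{*}\in\mathcal G$ with $\mathcal H(W_A\mid A)=0$ and, by Lemma~\ref{lemma:num_feats}, $|W_A|^*\le|A|$. Any optimizer of \eqref{eq:consistency} must then do at least as well as this witness, which forces $|W_A|\le|A|$ at the optimum (in the cyclic example above, the consistent assignment achieves objective value $2$ versus at least $3$ for $f^*$, confirming $f^*$ is suboptimal even though no local merge exposes this). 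If you replace your merging step with this consistent-relabeling witness and the comparison-to-a-feasible-point argument, the proof goes through; as written, the technical crux you flagged is not merely unfinished but rests on a false combinatorial claim.
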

\begin{proof}
    See Appendix.
\end{proof}

\Cref{thm:num_feats,thm:inheritance} demonstrate that optimizing for basic information-theoretic constraints produces grammatical systems that inherit from, and are bounded by, semantic categories. As an example, consider again the gender system $\{+male, +female, \o \}$, with $\o$ corresponding to absence of biological sex or gender. Empirically, in all languages whose grammatical gender system is historically based on this semantic categorization, there are at most three values: masculine [Masc], feminine [Fem], and neuter [Neut]~\cite{corbett1991gender}. Our proposal provides an analytical basis for empirical universals \cite{corbett1991gender,corbett2012features} in which grammatical systems universally derive from semantic assignment.

\subsection*{Memory-surprisal tradeoff underlies semantic specification}
\label{sec:specification}
We just showed that grammatical values inherit from semantic features. But the \emph{degree} of inheritance, that is, how informative the grammatical value is about semantics, can vary. Here, we break down the variation in informativity in terms of semantic specification, showing that underspecification in grammatical systems mathematically results from \eqref{eq:objective}. More precisely, semantic specification can be divided into three categories: (1) full specification, in which grammar perfectly encodes semantics, (2) underspecification, in which grammar partially encodes semantics, and (3) neutralization, in which grammar does not encode semantics at all. 
These three categories correspond to three sets of encoding strategies $f$, where full specification and neutralization reflect opposite extremes of underspecification.
The category $f$ falls into is governed by $\alpha_x$ in \eqref{eq:objective}.  

\begin{tcolorbox}[ams equation,title={Semantic encoding is a partition of $\mathcal A$},]
\label{eq:semantic_encoding}
    f_\alpha(x) 
    \begin{cases}
        \text{full specification}, & \text{if } \alpha_x \to \infty \\
        \text{underspecification}, & \text{if } 0 < \alpha_x < \infty \\
        \text{neutralization}, & \text{if } \alpha_x = 0
    \end{cases}
\end{tcolorbox}

The parameter $\alpha_x$ may be interpreted as the degree of communicative need. When $\alpha_x \to \infty$, it is paramount to perfectly communicate about $x$. In this case, full specification minimizes surprisal and $f$ is a one-to-one mapping from grammatical to semantic value. The encoding strategy of neutralization lies at the other extreme which maximizes simplicity ($\alpha_x = 0$, lowest communicative need): neutralization, which always produces the same value, e.g., [Masc], for a given referent, minimizes memory complexity for the gender feature by placing all probability mass on a single value. 
Semantic underspecification occurs in all other cases, namely when $0 < \alpha_x < \infty$. If $f$ is subject to both surprisal and memory costs minimization pressures, then the grammatical value distribution differs from the semantic attribute distribution given the referent: surprisal minimization anchors $p_{W_A | X=x}$ to $p_{A|X=x}$, while memory  minimization compresses $p_{W_A | X=x}$ to one value. Then, the mapping $f$ compresses the semantic values of $A$ to the grammatical values of $W_A$. This means that there exists a grammatical value which necessarily has a higher probability than a semantic value that maps to it. These so-called \emph{default} values occur in semantically underspecified noun systems, in the way described in \cref{fig:keyfig}b. To illustrate, in Spanish, when referring to a cat, `el gato'[Masc] may be employed by the sender
as the default value although the biological sex distribution is balanced.

Finally, it is possible that a feature $A$ takes some semantic attribute with probability 1; while cats vary in their biological sex, inanimate referents like chairs do not, taking a biological sex of $\o$. In this case, $\mathcal H(A| W_A,X=x)$ and $\mathcal H(W_A|X=x)$ are both minimized when $W_A$ is a constant for each referent, regardless of $\alpha_x$. Formal values like in `silla' [Fem] are values for which $\alpha_x = 0$, including also nouns of animate referents that only occur in one gender, as Spanish `camello' [Masc] (camel) and `jirafa' [Fem] (giraffe).

Now, we claim that semantic underspecification is a consequence of the memory-surprisal tradeoff when $\alpha_x < \infty$. Again, let languages optimize \eqref{eq:objective}. We argue that semantic underspecification occurs when the entropy of the semantic attribute distribution bounds that of grammatical values:

\begin{lemma}[Entropy of grammatical values given referent]
\label{lemma:entropy_feat_ref}
    The entropy of $W_A = f^*(A)$, where $f^*$ satisfies \eqref{eq:objective}, is bounded above by the entropy of semantic feature $A$:
    $$\mathcal H(W_A|X\text{=}x) \leq \mathcal H(A | X=x)$$
    The equality condition occurs when surprisal $H(A|W_A,X\text{=}x)$ is minimized.
\end{lemma}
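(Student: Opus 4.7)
The plan is to exploit two structural facts: (i) by \eqref{eqn:f_definition} the encoder $f$ (and hence $f^*$) is a \emph{deterministic} map $A \mapsto W_A$, so conditioning on $X=x$ the random variable $W_A$ is a deterministic function of $A$; and (ii) the chain rule for conditional entropy applied to the joint $(A, W_A)$ given $X=x$. Everything else follows without appeal to the optimality of $f^*$ beyond what is needed to pin down the equality case.

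First, I would observe that since $W_A = f^*(A)$, we have $\mathcal H(W_A \mid A, X\text{=}x) = 0$. Applying the chain rule to $(A, W_A)$ conditioned on $X=x$ in two ways yields
\begin{align*}
\mathcal H(A, W_A \mid X\text{=}x)
&= \mathcal H(A \mid X\text{=}x) + \mathcal H(W_A \mid A, X\text{=}x) \\
&= \mathcal H(A \mid X\text{=}x),
\end{align*}
and symmetrically
$$\mathcal H(A, W_A \mid X\text{=}x) = \mathcal H(W_A \mid X\text{=}x) + \mathcal H(A \mid W_A, X\text{=}x).$$
Equating these two expressions gives the identity
$$\mathcal H(W_A \mid X\text{=}x) = \mathcal H(A \mid X\text{=}x) - \mathcal H(A \mid W_A, X\text{=}x).$$
Since conditional entropy is nonnegative, subtracting $\mathcal H(A \mid W_A, X\text{=}x) \geq 0$ yields the claimed bound $\mathcal H(W_A \mid X\text{=}x) \leq \mathcal H(A \mid X\text{=}x)$.

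For the equality statement, the identity above shows that the gap between the two entropies is exactly the surprisal term $\mathcal H(A \mid W_A, X\text{=}x)$ that appears in the instance-level objective \eqref{eqn:word_level}. Hence equality holds precisely when this surprisal attains its minimum value $0$, which corresponds to $f^*$ being injective on the support of $p_{A \mid X=x}$ (the regime $\alpha_x \to \infty$ of full specification described in the semantic-encoding partition). There is no real technical obstacle; the only subtlety is being careful that the chain rule is applied under the conditioning $X=x$ rather than in marginal form, so that the determinism of $f^*$ (as a function of $A$ alone) can be invoked cleanly without needing to argue about $p_X$ or the system-level terms of \eqref{eq:objective}.
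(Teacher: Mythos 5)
Your route is genuinely different from the paper's. The paper never writes down a chain-rule identity; instead it exhibits a particular feasible encoder --- a bijection between $A_{X=x}$ and $W_{A\mid X=x}$ --- observes that this benchmark attains $\mathcal H(W_A\mid X\text{=}x)=\mathcal H(A\mid X\text{=}x)$ while already satisfying the surprisal constraint, and concludes that any $f^{**}$ at least as good for \eqref{eqn:word_level} must have memory no larger than the benchmark's. Your argument instead derives the exact identity $\mathcal H(W_A\mid X\text{=}x)=\mathcal H(A\mid X\text{=}x)-\mathcal H(A\mid W_A,X\text{=}x)$, which buys more than the paper's proof: it quantifies the gap as precisely the surprisal term appearing in the instance-level objective, makes the equality case immediate, and needs no appeal to optimality of $f^*$ for the inequality itself.

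The one genuine issue is your premise (i). The identity $\mathcal H(W_A\mid A,X\text{=}x)=0$ holds only if $f^*$ assigns a \emph{single} grammatical value to each semantic attribute. The main text does describe the sender as deterministic, but the candidate set $\mathcal F$ the appendix actually optimizes over contains one-to-many encoders: the proof of \Cref{lemma:num_feats} explicitly entertains $|\mathrm{im}_f(a)|>1$ (synonymous values for one attribute), and the simulations optimize over joint distributions $p_{A,W_A\mid X=x}$. For such an encoder the chain rule gives $\mathcal H(W_A\mid X\text{=}x)-\mathcal H(A\mid X\text{=}x)=\mathcal H(W_A\mid A,X\text{=}x)-\mathcal H(A\mid W_A,X\text{=}x)$, which can be strictly positive; for example, a single attribute split $50/50$ between two synonymous values yields $\mathcal H(W_A\mid X\text{=}x)=1>0=\mathcal H(A\mid X\text{=}x)$. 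So under the appendix's reading of $\mathcal F$ the bound is false for arbitrary feasible $f$, and determinism must be \emph{derived} from optimality rather than read off \eqref{eqn:f_definition} --- which is exactly the work the paper's comparison-to-a-bijection step does. Your proof becomes airtight if you either state explicitly that $\mathcal F$ is restricted to deterministic maps (defensible from the main text's ``deterministic sender'') or add an argument that any minimizer of the memory term subject to the surprisal constraint must satisfy $\mathcal H(W_A\mid A,X\text{=}x)=0$; absent one of these, the paper's benchmark argument is the safer route.
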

\begin{proof}
    See Appendix.
\end{proof}

If the entropy of the grammatical values is less than that of semantic attributes, then the encoding $f$ necessarily shifts probability mass from several semantic attributes onto one grammatical value (see \cref{fig:keyfig}b; $f$ maps some +\emph{female} and all +\emph{male} referents to [Masc]). If a grammatical value encodes several semantic attributes, then it is semantically underspecified by definition. Thus, underspecification is a result of minimizing memory cost at the instance level (\eqref{eqn:word_level}).

\subsubsection*{Semantic specification in real languages}
We have just proposed semantic specification to theoretically follow from the memory-surprisal tradeoff. Now, we show empirically that patterns of semantic underspecification are consistent across a sample of 12 languages of diverse groups. The languages considered are French, Spanish, Catalan, and Italian (Romance), German, English, Swedish, and Dutch (Germanic), Polish and Slovene (Slavic), and Hebrew and Arabic (Semitic), selected due to availability of large-scale corpora. We analyze their grammatical distributions using web-scale corpus data, further detailed in the Methods, and here situate them with respect to the specification strategies in \eqref{eq:semantic_encoding}. Distributions of grammatical features are reported in \cref{tab:entropies} and \cref{fig:token_distribution}.

All considered languages mark number in nouns, and, except for English, all mark gender. The vast majority of referents in these languages each take one grammatical gender. Inanimate nouns $x$ have a gender or biological sex of $\o$, so $\alpha_x$ in \eqref{eqn:word_level} can take any value in $\mathcal A$; for animate referents that have gender or biological sex but take only one grammatical gender, such as `el camello' 
[Masc] (camel in Spanish), $\alpha_x = 0$, reflecting maximal instance-level compression in \eqref{eq:semantic_encoding}. 

When grammatical gender is semantically interpreted, $\alpha_x > 0$. Animate nouns that inflect for gender in Romance, Semitic, and Slavic languages fall into this category. The distribution over grammatical values is surprisingly consistent within and between language families, where 
\begin{equation}
    p(\text{Masc,Sing}) \geq p(\text{Masc,Plur}) > p(\text{Fem,Sing}) > p(\text{Fem,Plur})
\end{equation} 
where distributions are shown on the right side of plots in \cref{fig:token_distribution}. If male and female referents are evenly distributed among this animate subset,\footnote{We make this assumption considering \emph{contemporary} language use; the assumption holds for animals, whose biological sex is 50/50, and may be reasonable for nouns denoting jobs and professions, given 2024 estimates of women's labor participation by the World Bank~\citep{Ortiz-Ospina_Tzvetkova_Roser_2024}.} then the systematic preference for the grammatical masculine would indicate \emph{semantic underspecification} of the masculine.

For the considered languages with grammatical gender, we summarize the following categories in \cref{tab:partition_real_data}: $\alpha_x = 0$ corresponding largely to neutralized animate nouns; $\alpha_x > 0$ corresponding largely to animate nouns that inflect for gender; $\alpha_x \to \infty$ in Dutch nouns that encode \emph{animacy}, referring to humans, for example.

Finally, in all languages considered, grammatical number semantically encodes numerosity. Human numerical cognition can distinguish precisely up to four units, and map approximately larger sets~\cite{cheyette2020unified}. While grammatical values encoding these numerosities are found across languages, no grammatical system displays all of them, hence $0 < \alpha_x< \infty$ for all referents inflecting for number in all languages. 
    
\begin{table*}[]
\small
    \centering
\begin{tabular}{cccc}
    \toprule
        $\alpha_x$ & languages & subset of lexicon & examples \\
        \midrule
        $\alpha_x \to \infty$ & Dutch & nouns referring to humans & \emph{de leraar} (Dutch) \\
        $\alpha_x > 0$ & \textbf{Romance, Slavic, Semitic,} German & animate nouns that inflect for gender & \emph{il gatto, la gatta} (Italian) \\
        $0 < \alpha_x < \infty$ & Dutch, Swedish & $\emptyset$ & -- \\
        $\alpha_x = 0$ & \textbf{Romance, Slavic, Semitic,} German & neutralized animate nouns & \emph{el camello} (Spanish) \\
        \hline
    \end{tabular}
        \caption{Values of $\alpha_x$ correspond to particular subsets of the lexicon in the 12 languages considered. From left to right, the columns consider the range of $\alpha_x$; the language or language \textbf{family}, the subset of the lexicon to which $\alpha_x$ applies for those languages, and several examples. All languages except for English are considered for grammatical gender and all languages are considered for number. In general, nouns for which grammatical gender does not encode semantics can take any value of $\alpha_x$.}
        \vspace{-1.5ex}
    \vspace{-1ex}
    \label{tab:partition_real_data}
\end{table*}
We showed a minimal set of cognitive principles to explain why grammatical systems inherit from semantic categories and exhibit semantic underspecification. In particular, our results suggest that pressures from semantic encoding constrain the distribution of grammatical values. Now, we turn to the other key pressure in our model, that is, the functional goal to optimize agreement-based discriminability. We find that semantic and functional pressures interact to shape the distribution of grammars in real languages.

\begin{figure*}[t]
   \centering
   \hspace{-1.7em}
    \includegraphics[width=0.95\textwidth]{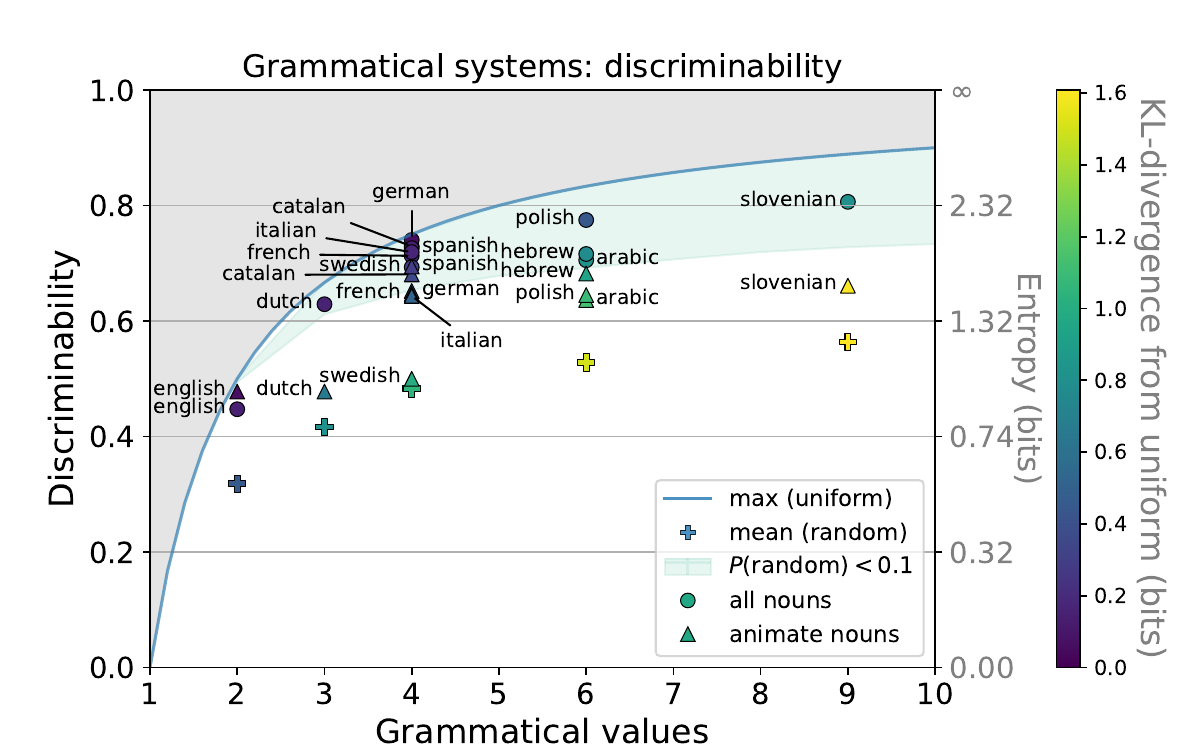}
    \caption{Discriminability (left y-axis) and entropy (right y-axis) of the overall (gender, number) systems for observed languages with respect to number of grammatical values (x-axis). The optimal solution for functional processing, which is the uniform distribution, is the blue curve; above it, the gray shaded area comprises theoretically unreachable values. Each plotted point is either the set of all nouns of a language (\textcolor{teal}{$\bullet$}) or its animate subset (\textcolor{teal}{$\blacktriangle$}). Points are colored by their KL-divergence ($\mathcal D_{KL}$)
    (bits) from the uniform distribution, which is their vertical distance to the optimal line. Mean discriminability and entropy are plotted (\textcolor{teal}{$\mathord{\text{\ding{58}}}$}) for a baseline class of hypothetical languages sampled i.i.d.~($N$=$1000$) from a Dirichlet distribution. For this baseline class, the empirical $\geq$90th percentile for discriminability (entropy) is shaded in light green. For all languages except English (whose animate and all-discriminability are roughly equal), the discriminability of all nouns is higher than that of the animate subset. All languages' discriminability and entropy are close to the optimum \emph{unlikely by chance} ($p<0.1$), seen by plotted points (circles) falling in the shaded region. }
    
    \label{fig:optimal_discrim}
\end{figure*}

\subsection*{Languages are near-optimal for functional processing}
Beyond semantic encoding, grammars ease language processing by enabling word prediction through agreement. 
It remains unknown whether cross-linguistically, grammatical systems optimize for agreement-based discriminability. 
In this section, we present evidence demonstrating that the grammatical number and gender organization of our sample of 12 languagesare near-optimal for functional processing. In addition, we consider subsystems where grammatical values are semantically interpreted; in these subsystems, we find a negative interaction between semantic and functional goals, where the need to encode meaning pulls grammatical distributions away from optimality.

To assess how optimal each system is, we compare it to a baseline class of random languages sampled from a Dirichlet distribution (details in Appendix).
The extent to which grammars deviate from optimal functional processing is given by their Kullback-Leibler divergence (KL-divergence or $\mathcal D_{KL}$) from the uniform distribution (see Materials and Methods for definition). We consider a system to be near-optimal if its $\mathcal D_{KL}$ is significantly lower than that of the random baseline, with a p-value cutoff of $0.1$.

\Cref{fig:optimal_discrim} shows the entropy $\mathcal H(\mathbf{W_A})$ and agreement-based discriminability $\text{Agr}D$ of the number and gender grammatical system as a function of grammatical values, plotting all languages.  
We find that the system entropy $\mathcal H(\mathbf{W_A})$ of all languages is near-optimal for 
agreement-based discriminability, seen in the figure by points \textcolor{teal}{$\bullet$}, each standing for a language's whole lexicon, falling in the shaded region ($p<0.1$) under the optimal curve (blue). 
In the same figure, we plot the subset of animate nouns \textcolor{teal}{$\blacktriangle$} for which grammatical gender is semantically interpreted. In contrast, the animate subsets are less optimal for agreement-based discriminability (further away from the blue curve in \cref{fig:optimal_discrim}) and do not significantly differ from random languages (falls outside the shaded region). This suggests that optimal discriminability is salient at the level of the entire system, but not for subsystems where semantic encoding is preferred. 

\begin{small}
\begin{table*}[t]
\centering
\begin{tabular}{@{}lccccccccccccc@{}}
\toprule
& \multicolumn{4}{c}{\textbf{Gender}} & \multicolumn{4}{c}{\textbf{Number}} & \multicolumn{4}{c}{\textbf{Overall}} \\ \cmidrule(lr){2-5} \cmidrule(lr){6-9} \cmidrule(lr){10-13}
 Language & \# Values & Max $\mathcal H$ & $\mathcal H$ &$\mathcal D_{KL}$ & \# Values & Max $\mathcal H$ & $\mathcal H$ & $\mathcal D_{KL}$ & \# Values & Max $\mathcal H$ & $\mathcal H$ & $\mathcal D_{KL}$ \\ \midrule 
Catalan & $2$ & $1.0$ & $\approx 1.0$  & $3e$-$4^\ddagger$ & $2$ & $1.0$ & $0.89$ & $0.11$ & $4$ & $2.0$ & $1.89$ & $0.12^\ddagger$ \\
French & $2$ & $1.0$ & $\approx 1.0$ & $3e$-$3^\ddagger$ & $2$ & $1.0$ & $0.80$ & $0.23$ & $4$ & $2.0$ & $1.79$ & $0.23^\ddagger$ \\
Italian & $2$ & $1.0$ & $0.99$ & $0.01^\dagger$ & $2$ & $1.0$ & $0.83$ & $0.17$ & $4$ & $2.0$ & $1.83$ & $0.18^\ddagger$ \\
Spanish & $2$ & $1.0$ & $\approx 1.0$ & $3e$-$3^\ddagger$ & $2$ & $1.0$ & $0.87$ & $0.14$ & $4$ & $2.0$ & $1.87$ & $0.14^\ddagger$ \\
\midrule
Polish & $3$ & $1.58$ & $1.46$ & $0.15^\dagger$ & $2$ & $1.0$ & $0.80$ & $0.23$ & $6$ & $2.58$ & $2.15$ & $0.43^\ddagger$ \\
Slovene & $3$ & $1.58$ & $1.54$ & $0.05^\ddagger$ & $3$ & $1.58$ & $0.86$ & $1.61$ & $9$ & $3.17$ & $2.37$ & $1.70^\ddagger$\\
\midrule
Arabic & $2$ & $1.0$ & $0.91$ & $0.09$ & $3$ & $1.58$ & $0.86$ & $1.91$ & $6$ & $2.58$ & $1.76$ & $0.82^\dagger$ \\
Hebrew & $2$ & $1.0$ & $0.97$ & $0.03^\dagger$ & $3$ & $1.58$ & $0.85$ & $2.49$ & $6$ & $2.58$ & $1.81$ & $0.77^\dagger$\\
\midrule
German & $3$ & $1.58$ & $1.51$ & $0.08^\ddagger$ & $2$ & $1.0$ & $0.80$ & $0.23$ & $4$ & $2.0$ & $1.94$ & $0.06^\ddagger$ \\
English & \o & -- & -- & -- & $2$ & $1.0$ & $0.86$ & $0.16$ & $2$ & $1.0$ & $0.86$ & $0.16$ \\
Dutch & $2$ & $1.0$ & $0.88$ & $0.13$ & $2$ & $1.0$ & $0.74$ & $0.30$ & $3$ & $1.58$ & $1.44$ & $0.15^\dagger$\\
Swedish & $2$ & $1.0$ & $0.83$ & $0.19$ & $2$ & $1.0$ & $0.91$ & $0.09$ & $4$ & $2.0$ & $1.71$ & $0.39^\ddagger$ \\
\bottomrule
\end{tabular}
\caption{Number of possible values, entropy $\mathcal H$ of grammatical values in bits, and the KL-divergence $\mathcal D_{KL}$ from the optimal uniform entropies (max~$\mathcal H$) are reported (left to right) for \textbf{gender} and \textbf{number} features, as well as for the \textbf{overall} grammatical system, for each language. $\mathcal D_{KL}$ is marked if significantly lower than that of random categorical distributions at a p-value cutoff of $0.05^{\ddagger}$, $0.1^\dagger$, and unmarked otherwise. The maximum entropy column is the theoretical entropy of the uniform distribution, while the entropy column is computed from corpora. From top to bottom, languages are grouped into Romance, Slavic, Semitic, and Germanic language families. }
\vspace{-4ex}
\label{tab:entropies}
\end{table*}
\end{small}

We find that for number and animate noun gender, which encode semantics, $\mathcal D_{KL}$ is high and not significantly different from random.

First, we consider grammatical gender, which is semantically interpreted for certain animate nouns in German and Romance, Semitic, and Slavic languages, but not for general nouns. For the languages considered, in all nouns, $\mathcal D_{KL}$ of gender from the optimal uniform distribution is significantly lower than chance: $0.06 \pm 0.06$ (SD) bits on average (see the left section in \cref{tab:entropies}). In particular, Romance languages' gender systems are nearly optimal for discriminability, with $\mathcal D_{KL} \approx 0$. In contrast, in the animate subset, $\mathcal D_{KL}=0.41 \pm 0.16$ across languages, significantly higher than in the set of all nouns. This suggests an opposing interaction between semantic encoding and functional processing on the shape of the distribution. When a grammatical feature does not transmit semantic information, its values disperse uniformly for functional processing; the need to transmit meaning instead pulls its distribution away from the uniform.

Analogously, we consider grammatical number, which encodes numerosity. We first note that number greatly compresses numerosity: among the observed languages, the maximum information content is only $1.58$ bits (Slovene), corresponding to $2^{1.58}\approx 3$ effective values to encode all numerosities we can perceive. $\mathcal D_{KL}$ from the optimal uniform distribution, see middle of \cref{tab:entropies}, is $0.63 \pm 0.81$ bits and not statistically lower than random, suggesting that number, like gender in animate nouns, distributes suboptimally for functional processing. Here again, the $\mathcal D_{KL}$ to the optimal distribution for discriminability marks a contrast between semantic and functional roles.
We just showed that, in the entire lexicon, grammatical values distribute nearly optimally to support functional processing; but, within each semantically interpretable subsystem (number and animate noun gender), that the grammatical value distribution is highly non-uniform. Overall, results suggest that semantic goals shape grammatical value distributions within semantically-interpreted subsystems; at the same time, functional goals disperse values across the entire lexicon.
 
 \section*{Discussion} 
Our information-theoretic model shows that universally shared grammatical structures and word distributions emerge from jointly optimizing sentence processing demands and the surprisal–memory tradeoff, modulated by communicative need. 

We extend previous formalizations focused on meaning encoding~\cite{Zaslavsky_Kemp_Regier_Tishby_2018,Mollica_Bacon_Zaslavsky_Xu_Regier_Kemp_2021,Kemp_Regier_2012} by incorporating system-level processing objectives~\cite{dye2017functional,franzon2023entropy} and a communicative need parameter, predicting encoding choices in single communicative instances.
Our results go beyond identifying  the optimal set of values in a system, and also explain their usage patterns, revealing the foundations to three key phenomena in grammatical organization—underspecification, semantic inheritance, and agreement-based discriminability—which we validate on large-scale data of 12 diverse languages. 

\subsection*{Semantic encoding and underspecification}
Previous research demonstrated that grammatical systems are organized to optimally encode semantic attributes~\cite{Mollica_Bacon_Zaslavsky_Xu_Regier_Kemp_2021}. 

Our framework introduces the influence of communicative need in semantic encoding, shaping both the overall grammatical system and the selection of values in individual communicative instances. 
Optimal encoding spans a continuum from maximizing informativeness to minimizing cost, and communicative need regulates where encoding strategies are placed on this continuum. Low communicative need corresponds to instances when speakers do not know or do not want to transmit an attribute: in this contexts, they tend to favor low-cost, less precise encoding. We predict—and confirm in real language data—that grammatical values are not always used to convey maximum semantic precision; instead, at least one value typically functions in an underspecified, general role.

In particular, underspecification consistently appears in the distribution of gender values within animate nouns, where the more frequent use of masculine corresponds to an ambiguous semantic coding, as depicted in~\cref{fig:keyfig}b, which indicates that grammatical masculine does not necessarily encode a \emph{+male} semantic attribute. 
Our model does not explain why the underspecified value is consistently masculine across languages, but it predicts that underspecified values occur at higher frequency, and in turn that a more frequent value is more likely to be interpreted as underspecified. 
A higher number of male referents denoting human occupations, perhaps due to historical and cultural drivers, can bias the frequency towards the masculine value~\cite{haspelmath2021explaining,harrison-etal-2023-run}. 
By satisfying the pressure towards consistency, the grammatical masculine may then be interpreted and reused as underspecified in the whole system.
Our perspective of underspecification suggests that, beyond the true distribution of semantic attributes, the use of a value can be influenced by its own frequency. 
This view bridges vocabulary-based theories of meaning encoding with theories causally relating word frequency to processing ease \cite{zipf1999psycho, kanwal2017zipf, piantadosi2014zipf}. Our data reflect a single moment in linguistic evolution, which can be expanded by evolutionary modeling to address how underspecification arises over time. 

\subsection*{Semantic inheritance}
Although grammatical values do not maximize meaning transmission, they inherit from semantic attributes as a foundation. This aspect was overlooked by previous accounts deriving language efficiency from surprisal-memory tradeoffs~\citep{Futrell_Hahn_2022}. 
For instance, past formalizations that minimize surprisal bound to a cost constraint include optimal solutions that admit many synonyms for one meaning \citep{jbb_2024,strouse_deterministic_2017}. 
Instead, our \eqref{eq:objective} bounds the range of grammatical values by the number of semantic attributes, suggesting that Zipf's PLE can explain a universally occurring property for known languages \cite{corbett1991gender, corbett2000number}. 

Our model takes for granted that some attributes, and not others, are grammaticalized cross-linguistically. We take the view that the chosen encoded attributes would enter in Objective~\ref{eq:objective} as another inner optimization constraining $\mathbf A$ based on extra-linguistic environmental factors. 
We leave to further studies how the perceptual and cultural salience of the grammaticalized attributes, or their distribution across referents in real-world speakers'experience could select best candidates for grammatical encoding. 

\subsection*{Functional role and discriminability}
Finally, we show for 12 languages that number and gender values distribute to optimize discriminability through agreement, enabling regular cues for word prediction. 
Our analysis can be applied to other grammatical features like tense, aspect, mood, nominal classifiers, and to the organization of functional words like prepositions, as well as lexical words, especially when they have a role in prediction, like prenominal adjectives~\cite{hahn2018information, dye2018alternative}. In these cases, the different balance between semantic and functional goals may result in different distributions that need to be empirically measured. 

\subsection*{Distance from optima and language change}
While most of the languages we examined align closely with the predicted optimal distributions of grammatical values (see~\cref{fig:optimal_discrim} and \cref{fig:token_distribution}), Arabic, Dutch and Swedish show notable deviations, approaching the patterns of the random baseline in the animate set. These languages are currently undergoing grammatical simplification, involving different stages of neutralization of [Masc] and [Fem] into a common value encoding \textit{+animate}~\cite{fehri2020number, van2009emergence, brouwer2017processing}. 
The deviation of value distributions from predicted optima is likely a hallmark of the ongoing reorganization, potentially leading the language toward a new optimal state that maintains discriminability while using fewer values. 
This suggests that while some deviation from optimality is tolerable, more substantial alterations in distribution entail grammatical change. Future longitudinal measurements could identify the thresholds at which stability breaks down, as well as the processes driving these shifts, offering deeper insight into the dynamics of language change beyond the static patterns observed in our current data.

\section*{Conclusion}
Our results show that grammatical systems arise from a trade-off between encoding semantic content and minimizing processing cost. Languages compress a rich set of semantic attributes into a smaller set of grammatical values, namely a grammatical system. The maximum entropy of this set defines an upper bound on the information that the grammatical system can encode. 
In practice, the maximum bound is rarely reached in the actual use of languages; in other words, meaning encoding is never maximized in grammatical values. Communicative need modulates encoding at the level of individual communicative instances: when need is low, speakers use less precise and less costly forms, corresponding to underspecified values. 
At the same time, the upper bound resulting from the encoding of semantic attributes also constrains the maximum entropy available for functional purposes, explaining why in no language the set of formal values exceeds the one of semantically interpreted values. 
Overall, grammar inherits its distinctions from semantics and exploits them to sustain efficient processing rather than maximal meaning transmission. Grammars repurpose concrete, perception-based meanings to build abstract functional structures that can be used to express anything.

\begin{figure*}[t]
\centering
\includegraphics[width =\linewidth]{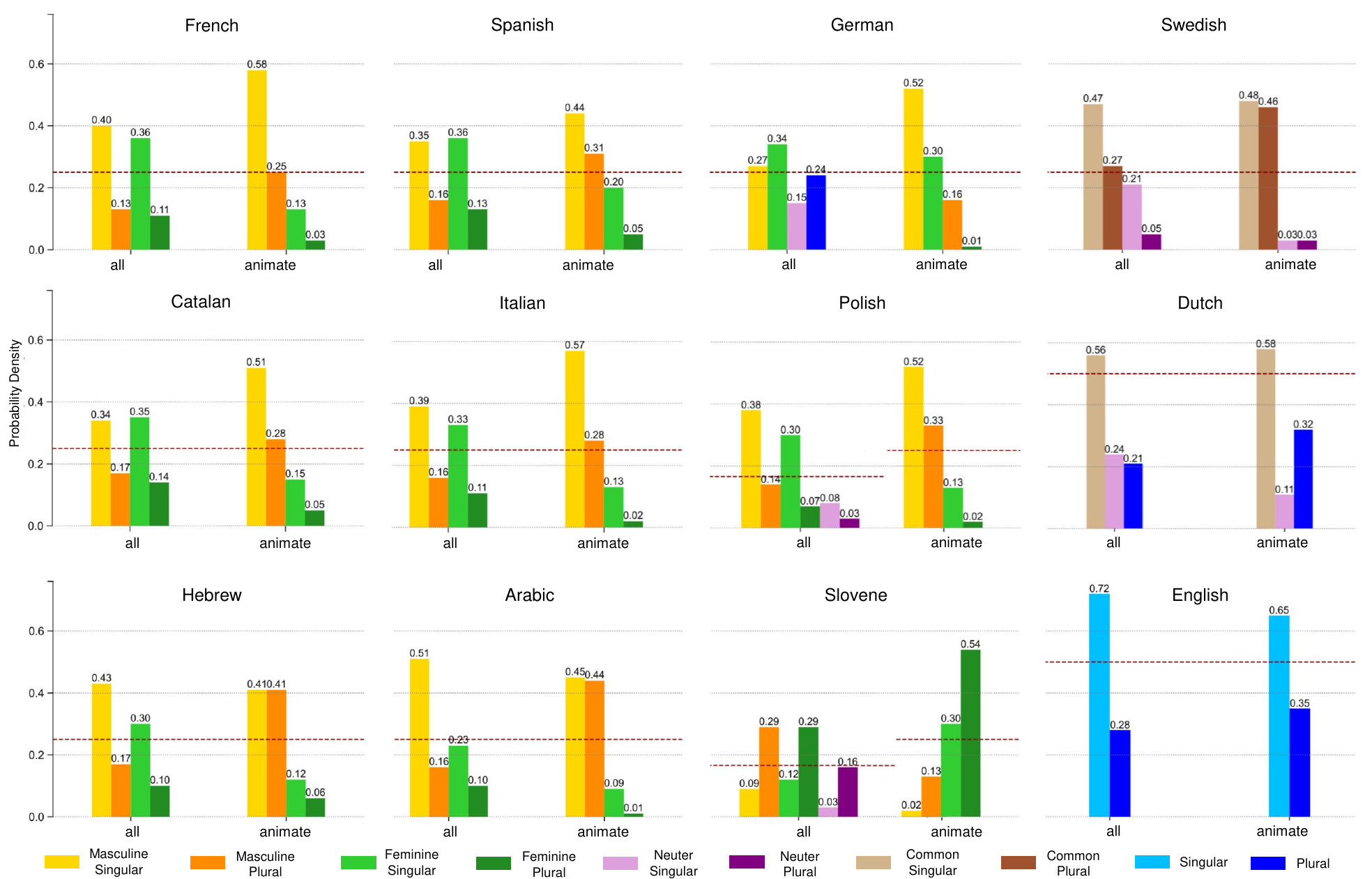} 
\caption{The grammatical value distributions for gender $\times$ number are plotted for the \emph{all nouns} set and \emph{animate subset}, for all 12 languages studied: Romance (French, Spanish, Catalan, Italian), Semitic (Hebrew, Modern Standard Arabic), Germanic (German, Swedish, Dutch, English), and Slavic (Polish, Slovene). The animate subset is defined for all languages except the fourth column, as a set of nouns which inflect for the masculine and feminine, encoding biological sex or gender. For such languages, the entropy of the gender value is positive in animates (greater than that of an inanimate referent, which has 0 entropy). The reference uniform distribution is plotted as a dashed horizontal red line. The dual number, present in Semitic languages and Slovene, is not shown as it contributes little probability mass.}
\label{fig:token_distribution}
\end{figure*}

\subsection*{Material and methods}
Our study relies on the information-theoretic framework in Shannon's theory of communication \citep{shannon1948mathematical}. Here, we define and motivate the information-theoretic terms in \eqref{eq:objective}, describe our analysis of grammatical distributions in 12 languages, and briefly discuss numerical simulations of \eqref{eq:objective} (elaborated in the Appendix).

\subsubsection*{Information-theoretic formalism} Entropy and other information-theoretic measures provide the mathematical formalism for Shannon's theory of communication \citep{shannon1948mathematical}. As such, it is natural to describe the communicative pressures in \eqref{eq:objective} such as memory, surprisal, and semantic consistency in information-theoretic terms \citep{Futrell_Hahn_2022,gibson2019efficiency}. Here, we define key terms \emph{entropy}, \emph{conditional entropy}, and \emph{informativity} (mutual information).

In \eqref{eq:objective}, we use the Shannon entropy $\mathcal H(\cdot)$ to describe memory cost at the instance level and discriminability at the system level. The Shannon entropy $\mathcal H(X)$ of a discrete random variable $X$ supported on $\mathcal X$ is defined as 
\begin{equation}
    \mathcal H(X) \triangleq - \sum_{x \in \mathcal X} p(x) \log p(x),
\end{equation}
and describes the amount of \emph{information} in bits contained in $X$. Entropy is moreover an intuitive measure of \emph{cost}, where cost is equivalent to \emph{coding length} of $X$. As such, in \eqref{eq:objective} we describe the memory complexity given a referent $\mathcal H(W_A|X=x)$ as the feature's coding length. Entropy describes the information content of $X$, but, equivalently, its uncertainty or dispersion. At the system level, discriminability is optimal when grammatical values are maximally disperse in the lexicon \citep{franzon2023entropy}, which motivates the max-entropy term $\mathcal H(W_A)$ in \eqref{eq:objective}.

 One can also measure the average surprisal, or \emph{conditional entropy}, of a random variable $Y$ conditioned on random variable $X$. This is the coding length of $Y$ given $X$ and is given by
\begin{equation}
    \mathcal H(Y|X) \triangleq - \sum_{x\in \mathcal X, y \in \mathcal Y} p(x,y) \log\frac{p(x,y)}{p(y)},
\end{equation}
where $Y$ is supported on the set $\mathcal Y$. The higher the conditional entropy $\mathcal H(Y|X)$, the harder it is to predict $Y$ given $X$. As such, $W_A$ is more informative about $A$ given $X=x$ if the coding length of $A$ decreases upon knowing $W_A$. On the system level, conditional entropy $\mathcal H(W_A|A)$ formalizes \emph{consistency} of grammatical value given semantic value: $W_A$ should be predictable given $A$. 

Lastly, \emph{informativity}, or \emph{mutual information} between two random variables $X$ and $Y$, is given by
\begin{equation}
    I(X;Y) \triangleq \mathcal H(Y) - \mathcal H(Y\mid X) = \mathcal H(X) - \mathcal H(X \mid Y).
\end{equation}
Mathematically, $I(X;Y) \geq 0$ \citep{cover_thomas}. When $I(X;Y) > 0$, then learning about $X$ reduces surprisal about $Y$ on average, and vice-versa. We use informativity in \Cref{def:inheritance} and \Cref{thm:inheritance} to describe semantic inheritance; if a grammatical feature is informative about some semantic feature for some set of referents, then we say that the grammatical features inherits from that semantic feature.

\subsubsection*{Real language data}
We analyzed the distribution of nouns across web-based large-scale corpora in 12 languages~\cite{wikidump2}. Details on the dataset and its processing are reported in the Appendix.
We sampled languages from different groups, representing different grammatical systems: German, English, Dutch, Swedish (Germanic); French, Italian, Spanish, Catalan (Romance); Polish, Slovene (Slavic): Hebrew and Arabic (Semitic). We obtained grammatical gender and number information for noun occurrences using Stanza, a pre-trained, neural part-of-speech and grammatical features tagger \cite{qi-etal-2020-stanza}.  We refer to the above data as the set of \emph{all nouns}. 

Except for English, Dutch, and Swedish, all languages tested have a set of animate nouns where grammatical gender is semantically interpreted, often referring to animals, nationality, or profession (e.g., `una amiga', `un amigo' in Spanish). For those 9 languages, we created the subset of animate nouns by filtering the set of all nouns for animacy, then choosing those inflected or derived in both the masculine and feminine. Due to noise in the automatic tagger, the animate subset was checked by hand for each language. 

In both all nouns and the animate subset, we estimate the entropy of grammatical values $W_A$. We also report their Kullback-Leibler divergence ($\mathcal D_{KL}$) to the uniform distribution, where $\mathcal D_{KL}$ quantifies the disparity between the observed distribution $p$ and the optimal uniform distribution $q$. The $\mathcal D_{KL}$ is defined as follows:

\begin{equation}
    \mathcal D_{KL}(q||p) \triangleq \sum_{x \in \mathcal X} q(x) \log \frac{q(x)}{p(x)}.
\end{equation}

$\mathcal D_{KL}(q||p)$ describes the excess coding length when using the uniform distribution $q$ to describe the observed $p$. Lower $\mathcal D_{KL}$ from $p$ to the uniform distribution means $p$ is closer to uniform, and in our case, more optimal for agreement-based discriminability.

\subsubsection*{Numerical simulations}
For toy languages of $k\in \{2,4,6,16,32\}$ words, we optimized the distribution of grammatical values for a single feature with respect to \eqref{eq:objective}, systematically varying $\alpha_x$ at the instance level, then $\beta$ at the system level. As \eqref{eq:objective} is a complicated constrained optimization problem, guaranteeing convergence was nontrivial: we reformulated \eqref{eq:objective} as an unconstrained problem that was differentiable with respect to the encoder $f$, then performed gradient-based optimization of $f$. For mathematical and implementation details, see \emph{Validating analytic results with simulated languages} in the Appendix.

\subsection*{Data and Code Availability}
Language corpora were obtained from the Huggingface Wikipedia corpus \cite{wikidump2}. The code used to perform the analysis and modeling is available at \url{https://osf.io/fmsx7/?view_only=34807651d87d465dba2ecd3046b9f43e}.

\vfill

\pagebreak
\paragraph{Acknowledgments} This project has received funding from the European Research Council (ERC) under the European Union’s Horizon 2020 research and innovation programme (grant agreement No. 101019291). This paper reflects the authors’ view only, and the funding agency is not responsible for any use that may be made of the information it contains. The authors also received financial support from the Catalan government (AGAUR grant SGR 2021 00470).

We are grateful to Claudia Artiaco for useful feedback on the manuscript, and to Marco Baroni, Gemma Boleda, Jeanne Bruneau-Bongard, and Lucas Weber, Louise McNally, the members of the COLT group at the Universitat Pompeu Fabra and the Comparative Linguistics group at U.~Zurich for valuable feedback and discussion.

\end{multicols}

\bibliography{matching_bibliography}

\pagebreak
\subsection*{Appendix: Principles of semantic and functional efficiency in grammatical patterning}

\appendix
\renewcommand\theequation{\arabic{equation}}
\renewcommand\thelemma{\arabic{lemma}}
\renewcommand\thetheorem{\arabic{theorem}}
\renewcommand\thecorollary{\arabic{corollary}}

\subsection*{Extension: Deriving universals of grammatical systems}
\setcounter{lemma}{0} 
\setcounter{theorem}{0} 
\setcounter{corollary}{0}

We restate the grammatical encoding objective and its detailed mathematical derivation, as well as results and proofs for the theoretical section of the main paper.

\subsection*{Objective derivation}

We restate the objective below in \eqref{eq:objective_app}. This is a multi-level optimization, where the inner optimization encodes the instance-level objective and the outer optimization encodes the system-level objective. This can be seen as cutting out regions of the candidate space $\mathcal F$ of all possible encoders in two steps. Optimizing the instance-level objective first restricts $\mathcal F$ to $\mathcal G \subset \mathcal F$. Then, optimizing the system-level objective searches within $\mathcal G$ to produce the final set of optimal solutions $f^*$. 

\begin{tcolorbox}[title=Grammatical organization objective, colframe=blue!60!black, colback=blue!10,left=0mm, top=0mm]
\begin{subequations}\label{eq:objective_app}
\begin{align}
\underset{f}{\text{max}} & \qquad \underset{\text{discriminability}}{\mathcal H(W_A)} & \text{\textcolor{gray}{\textbf{system level}: functional processing}}\label{eq:system_level}\\
\text{s.t.} 
& \qquad f \in \underset{{f\in \mathcal G}}{\text{argmin}} \left \{\underset{\text{size}}{|W_A|} + \underset{\text{consistency}}{\beta \mathcal H (W_A \mid A)} \right \} & \text{\textcolor{gray}{simplicity}} \label{eq:consistency} \\
\text{where} \nonumber \\
\mathcal G = & \underset{x \in \mathcal X}{\bigcap} \underset{f \in \mathcal F}{\text{argmin}} \left\{ \underset{\text{memory}}{\mathcal H(W_A\mid X\text{=}x)}+\alpha_x \underset{\text{surprisal}}{\mathcal H(A\mid W_A,X\text{=}x)} \right\} & \text{\textcolor{gray}{\textbf{instance level:} semantic encoding, simplicity}}\label{eqn:word_level} \\
& \qquad f: A \mapsto W_A \label{eqn:f_definition}\\
&  \qquad \beta \geq 0 \label{eqn:beta} \\
& \qquad \alpha_x \geq 0 \ \ \forall x \in \mathcal X \label{eqn:alpha} 
    \end{align}
\end{subequations}
\end{tcolorbox}

\noindent 
In what follows, we separately state and derive the instance-level and system-level problems.

\subsubsection*{Instance-level objective} We instantiate Zipf's Principle of Least Effort as a constrained optimization problem, see \eqref{eq:primal_word}. The goal is, for each referent $x$, to minimize the memory of storing and transmitting the message $W_A$ given $X=x$, while requiring surprisal to be below a threshold $c_x \geq 0$. The memory minimization enters as objective \ref{eqn:word_level_obj} while surprisal enters as constraint \ref{eq:word_level_constr}.

\begin{tcolorbox}[title=Instance-level objective, colframe=orange!60!black, colback=orange!10,left=0mm, top=0mm]
\begin{multicols}{2}
\textbf{Primal (constrained)} 

\begin{subequations}\label{eq:primal_word}
\begin{align}
\underset{f \in \mathcal F}{\text{min}} & \qquad \underset{\text{memory}}{\mathcal H(W_A\mid X\text{=}x)}  \qquad \forall x \in \mathcal X \label{eqn:word_level_obj} \\
\text{s.t.} 
& \qquad \underset{\text{surprisal}}{\mathcal H(A\mid W_A,X\text{=}x)} \leq c_x \label{eq:word_level_constr} \\
\text{where} \nonumber \\
& \qquad f: A \mapsto W_A \\
& \qquad c_x \geq 0 \ \ \forall x \in \mathcal X 
\end{align}
\end{subequations}
\columnbreak

\textbf{Dual (unconstrained)} \vspace{1ex}
\begin{subequations}\label{eq:dual_word}
\begin{align}
\underset{f \in \mathcal F}{\text{min}} & \qquad \underset{\text{memory}}{\mathcal H(W_A\mid A,X\text{=}x)} + \alpha_x \underset{\text{surprisal}}{\mathcal H(A\mid W_A,X\text{=}x)} \label{eqn:word_level_dual_obj} \\
& \qquad \forall x \in \mathcal X \nonumber \\
\text{where} \nonumber \\
& \qquad f: A \mapsto W_A \\
& \qquad \alpha_x \geq 0 \ \ \forall x \in \mathcal X 
\end{align}
\end{subequations}
\end{multicols}
\end{tcolorbox}

It is possible to formulate the constrained optimization problem (Eq.~\ref{eq:primal_word}) as an unconstrained one (Eq.~\ref{eq:dual_word}). This is a classic reformulation in the constrained optimization literature, where $\alpha_x$ is a function of $c_x$ \cite{bertsekas1966}. We write both equivalent formulations above, where the dual formulation is the one that figures in the main paper.

\subsubsection*{System-level objective} Now, we move from the instance level to the system level. Assume that $f \in \mathcal G$, the set of optimal solutions in the instance-level objective \eqref{eq:word_level_constr}. Similar to the instance-level objective, the system-level objective is motivated by Zipf's PLE. At the system level, simplicity is first constrained (Eq.~\ref{eq:consistency}), which sets the number of possible values of $W_A$. Finally, the max-entropy term in \eqref{eq:system_level} is responsible for assigning referents to vocabulary items in a maximally dispersed way.

We first consider the simplicity constraint in \eqref{eq:consistency}. A closer look shows that the simplicity constraint is, itself, a constrained optimization problem. The equivalent primal problem to \eqref{eq:consistency} is written in \eqref{eq:primal_simplicity} below on the left, with the unconstrained formulation rewritten on the right hand side:

\begin{tcolorbox}[title=System-level simplicity constraint, colframe=orange!60!black, colback=orange!10,left=0mm, top=0mm]
\begin{multicols}{2}
\textbf{Primal (constrained)} 
\begin{subequations}\label{eq:primal_simplicity}
\begin{align}
\underset{f \in \mathcal G}{\text{min}} & \qquad \underset{\text{size}}{|W_A|} \label{eqn:simplicity_obj} \\
\text{s.t.}& \qquad \underset{\text{consistency}}{\mathcal H(W_A\mid A) < c} \\
\text{where} \nonumber \\
& \qquad f: A \mapsto W_A \\
& \qquad c > 0
\end{align}
\end{subequations}
\columnbreak

\textbf{Dual (unconstrained)}
\vspace{1ex}
\begin{subequations}\label{eq:dual_system}
\begin{align}
\underset{f \in \mathcal G}{\text{min}} & \qquad \underset{\text{size}}{|W_A|} + \beta \underset{\text{consistency}}{\mathcal H(W_A\mid A)} \label{eqn:simplicity_dual_obj} \\
\text{where} \nonumber \\
& \qquad f: A \mapsto W_A \\
& \qquad \beta \geq 0
\end{align}
\end{subequations}
\vfill
\end{multicols}
\end{tcolorbox}

The pressure towards simplicity at the system level further constrains the set of $f$, by imposing global correlations between the attribute-value mappings for referent. This effectively constrains the overall vocabulary space $\mathcal W_A$; where without it, the plural for `cat' may differ from the plural for `friend', the consistency term forces them to be the same. Likewise, the size reduction term adds a general pressure towards sparsity, and eliminates synonyms within and between referents. After these simplicity conditions are satisfied, entropy maximization is applied (Eq.~\ref{eq:system_level}) to assign nouns to grammatical values in a maximally dispersed way. This completes the system-level objective.

\subsection*{Proof: \cref{thm:inheritance}}
Here, we state and prove \cref{thm:inheritance}.

\begin{theorem}[Grammatical values inherit from semantics]
\label{thm:inheritance}
    If $f$ optimizes \eqref{eq:objective_app}, then each $W_A$ either inherits from some $A$ or consists of a single neutralized value for the whole lexicon.
\end{theorem}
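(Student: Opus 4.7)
The plan is to prove the contrapositive: assuming $f$ optimizes \eqref{eq:objective_app} and $W_A$ does \emph{not} inherit from $A$, I will show that the optimal $W_A$ must collapse to a single constant value across the whole lexicon, which is the theorem's second alternative.

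First, I would unpack the non-inheritance hypothesis via Definition~1. Since conditioning never increases entropy, the negation of the existential in Definition~1 gives the equality $\mathcal H(A \mid W_A, x \in \mathcal X_A) = \mathcal H(A \mid x \in \mathcal X_A)$ for every non-empty $\mathcal X_A \subset \mathcal X$. Specializing to singletons yields $\mathcal H(A \mid W_A, X=x) = \mathcal H(A \mid X=x)$ for every referent $x$, so the surprisal term in the instance-level objective \eqref{eqn:word_level} is a constant in $f$ pointwise in $x$ and drops out. The remaining instance-level problem is $\min_f \mathcal H(W_A \mid X=x)$, whose minimum $0$ forces any $f \in \mathcal G$ to be constant on the support of $p_{A \mid X=x}$ for every $x$.

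Next, I would feed this characterization into the nested system-level optimization. Because $W_A = f(A)$ is already deterministic in $A$, the consistency term $\mathcal H(W_A \mid A)$ in \eqref{eq:consistency} vanishes identically over $\mathcal G$, and the simplicity cost reduces to the size $|W_A|$ alone. Since $|W_A| \geq 1$ with equality iff $f$ is globally constant — and the constant encoder $f \equiv w_0$ trivially satisfies the instance-level constraint of being constant on every support — the inner argmin of \eqref{eq:consistency} is uniquely attained by such constant encoders, independently of $\beta \geq 0$. The outer discriminability maximization \eqref{eq:system_level} is then handed a feasible set consisting only of constant encoders and returns $\mathcal H(W_A) = 0$. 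Thus $W_A$ takes a single neutralized value on the whole lexicon, which establishes the contrapositive.

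The main obstacle I expect is not algebraic but structural: carefully threading the nested-argmin formulation of \eqref{eq:objective_app}. In particular, I must apply the negation of Definition~1 at singleton granularity so that the surprisal term vanishes referent-by-referent, and I must check the uniqueness of the simplicity minimum at $|W_A|=1$ robustly across all $\beta \geq 0$, including the edge case $\beta=0$ where the integer-valued size cost alone separates constant from non-constant encoders — but the strict jump from $1$ to $\geq 2$ handles this. A secondary subtlety is the degenerate case $|\mathcal X|=1$, for which $\{x\}$ is not a strict subset of $\mathcal X$ and the singleton specialization is unavailable; here, however, the conclusion is vacuous since any $W_A$ on a one-referent lexicon is automatically constant.
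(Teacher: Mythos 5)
Your proposal reaches the correct conclusion by a genuinely different route from the paper's. The paper argues directly and splits on the communicative-need parameters: it sets $\mathcal X_A=\{x:\alpha_x>0\}$, asserts that the instance-level optimum is strictly informative on each such referent, averages the resulting strict inequality over $\mathcal X_A$ to obtain inheritance on that subset, and handles $\mathcal X_A=\emptyset$ by noting that per-referent neutralization plus the system-level consistency constraint forces one global value. You instead take the contrapositive of \Cref{def:inheritance} and never mention $\alpha_x$: non-inheritance, specialized to singletons, means the optimal encoder attains the worst possible surprisal on every referent, which (you argue) forces zero memory per referent, after which the system-level size term collapses the lexicon to a single value. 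Your route is more self-contained, and it sidesteps the most delicate step of the paper's argument, namely the claim that $\alpha_x>0$ by itself guarantees strict informativeness of the instance-level optimum (for small positive $\alpha_x$ the per-referent constant encoder can still win the memory--surprisal tradeoff, so that step needs care); the paper's route, in exchange, ties the inherit-or-neutralize dichotomy directly to the parameters $\alpha_x$ that the rest of the paper interprets as communicative need.

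That said, two of your stated justifications are false as written, although both are patchable without changing your architecture. First, the surprisal term does not become ``a constant in $f$'' and cannot simply be dropped from the argmin in \eqref{eqn:word_level}; what is true is that your optimal non-inheriting $f$ attains the maximal surprisal $\mathcal H(A\mid X=x)$, which the per-referent constant encoder also attains while having zero memory, so optimality of $f$ forces $\mathcal H(W_A\mid X=x)=0$ by direct comparison against that competitor. (Also, this pins down only the optimal $f$ itself, not ``any $f\in\mathcal G$''; other members of $\mathcal G$ may be informative, but the system-level step still eliminates them.) Second, per-referent constancy does not make the consistency term vanish over $\mathcal G$: two referents sharing the same attribute value can be assigned different constants, giving $\mathcal H(W_A\mid A)>0$. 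Your size argument rescues the step---the globally constant encoder lies in $\mathcal G$, simultaneously minimizes both $|W_A|$ and $\mathcal H(W_A\mid A)$, and any competitor with $|W_A|\geq 2$ is strictly worse for every $\beta\geq 0$---but the claim that consistency ``vanishes identically'' should be deleted rather than relied upon. A final small point: your dismissal of the $|\mathcal X|=1$ case is incorrect, since a single referent's $W_A$ need not be constant; the issue evaporates if $\subset$ in \Cref{def:inheritance} is read as $\subseteq$, which is what the paper's own proof implicitly assumes when it takes $\mathcal X_A$ to be possibly all of $\mathcal X$.
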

\begin{proof}
    Consider a given $x$ and $A$. If $\alpha_x = 0$, then $f$ sends every value in $A$ to the same value, $f_\alpha(x)=\emptyset_x$. If $\alpha_x > 0$, then $H(A\mid W_A,X=x) < H(A\mid X=x)$ ($W_A$ is informative about $A$ for $X=x$). 

    Let $\mathcal X_A = \{x; \ \alpha_x>0, x\in \mathcal X\}$ be the set of referents $x$ for which $\alpha_x > 0$. First, assume $\mathcal X_A$ is non-empty. Then,
    \begin{align}
        H(A\mid W_A, x\in\mathcal X_A) &= \sum_{x \in \mathcal X_A} p(X=x\mid x\in\mathcal X_A)H(A\mid W_A,X=x) \\
        &< \sum_{x \in \mathcal X_A} p(X=x\mid x\in\mathcal X_A)H(A\mid X=x) \\
        & = H(A\mid x \in \mathcal X_A).
    \end{align}
    Hence, for each $A$ where $\mathcal X_A$ is non-empty, there is a corresponding $W_A$ supported over at least two values.

    Now, consider $\mathcal X_A$ empty. This means that $\alpha_x=0$ for all $x \in \mathcal X$. \Cref{eq:consistency} then enforces that $\emptyset_x = \emptyset_{x'} = \emptyset$ for all $x\neq x' \in \mathcal X$, or that $W_A$ produces the same neutralized null value, or empty string, for the entire lexicon. 
\end{proof}

\subsection*{Proofs: \cref{lemma:num_feats} and \cref{thm:num_feats}}
Here, we state the proofs of \cref{lemma:num_feats} and \cref{thm:num_feats}. To do so, we apply the constraint at the instance level \eqref{eqn:word_level}, and then apply the simplicity constraint at the system level \eqref{eq:consistency}. 

Our proofs rely on the stipulation that, at each step of the multi-level optimization, the feasible sets (for instance, $\mathcal G$) are non-empty. $\mathcal G$ is indeed not empty. To see why this is the case, note that $f$ can be represented as a vector of size $|\mathcal X|$, where each independent entry $f_x$ is indexed by a referent $x \in \mathcal X$. It suffices to show that a globally optimal $f_x$ exists for all $x$ under any choice of $\alpha_x \geq 0$; this is true because \Cref{eq:word_level_constr} constrains the set of possible $f_x$ to be closed and bounded.

\begin{lemma}[Number of grammatical values given referent]
\label{lemma:num_feats}
    If $f^* \in \mathcal G$ \eqref{eqn:word_level}, it satisfies $|W_{A\mid X=x}| \leq |A_{X=x}|$ $\forall x \in \mathcal X$.
\end{lemma}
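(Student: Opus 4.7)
My approach is to argue by contradiction: suppose $f^* \in \mathcal G$ satisfies $|W_{A|X=x}| > |A_{X=x}|$ for some referent $x$ with $p_X(x) > 0$, and construct a strictly better encoder, contradicting optimality in \eqref{eqn:word_level}. Because the paper models the sender as a deterministic encoder $f:\mathcal A \to \mathcal W_A$, the bound is in fact immediate: $W_{A|X=x} = f(\mathrm{supp}(A_{X=x}))$ is a function image, whose cardinality is at most that of its domain, so $|W_{A|X=x}| \leq |A_{X=x}|$ without any appeal to optimality. I would state this as the short proof. The rest of the plan concerns how one would establish the same bound if $f$ were instead taken to be a stochastic channel $p(w|a)$, which is where the optimization content actually bites, and which is useful for matching the information-theoretic spirit of \eqref{eqn:word_level}.

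\textbf{Case $\alpha_x = 0$.} Here the instance-level objective collapses to $\mathcal H(W_A|X=x)$. The grouping inequality for entropy, $-a\log a - b\log b \geq -(a+b)\log(a+b)$, implies that merging any two support points of $W_{A|X=x}$ strictly reduces this quantity, so an optimal $f^*$ must concentrate all probability mass on a single value. Then $|W_{A|X=x}| = 1 \leq |A_{X=x}|$ trivially.

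\textbf{Case $\alpha_x > 0$ (stochastic setting).} I would carry out a Carath\'eodory-type merging argument on the backward posteriors $q_w(a) \triangleq p(A=a \mid W_A=w, X=x)$. Each $q_w$ lies in the simplex over $A_{X=x}$ and they must satisfy the Bayes consistency $\sum_w p(w|x)\, q_w(\cdot) = p(\cdot|x)$. If $|W_{A|X=x}| > |A_{X=x}|$, the family $\{q_w\}$ is affinely dependent, so I can choose coefficients $\lambda_w$ with $\sum_w \lambda_w = 0$ (not all zero) and $\sum_w \lambda_w q_w = 0$. Perturbing $p(w|x) \to p(w|x) + t\lambda_w$, together with the compensating channel update $p(w|a) = q_w(a)\, p(w|x)/p(a|x)$, preserves both the marginal $p(A|X=x)$ and every posterior $q_w$ for $|t|$ small. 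Along this perturbation the surprisal term $\alpha_x \sum_w p(w|x)\, \mathcal H(q_w)$ is linear in $t$ while memory $\mathcal H(W_A|X=x)$ is concave, so the combined instance-level objective is concave in $t$ and achieves its minimum at an endpoint $t^*$ where some $p(w|x)$ first hits zero. At $t^*$ the support has shrunk by one value without raising the objective; iterating the construction forces $|W_{A|X=x}| \leq |A_{X=x}|$.

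\textbf{Main obstacle.} Under the deterministic-sender assumption adopted by the paper the lemma is essentially a statement about function images, and the real theoretical bite lives in \Cref{thm:num_feats}, for which this lemma is a setup (once one aggregates over $x$, the global consistency term in \eqref{eq:consistency} must also be invoked to rule out per-referent duplication such as ``cats / dogss / batsss''). If one relaxes determinism, the main obstacle is verifying that the weight perturbation above is realizable by a legitimate channel $p(w|a) \in [0,1]$ and that the endpoint truly eliminates a support point, but both are routine once one fixes the posteriors $q_w$ and the marginal $p(a|x)$ as I described.
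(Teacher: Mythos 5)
Your argument is correct, but it takes a genuinely different route from the paper's. The paper first proves an auxiliary claim (\cref{claim:hard_constraint}): under the \emph{hard} constraint that surprisal $\mathcal H(A\mid W_A,X\text{=}x)$ is globally minimized, the reverse map $g: W_{A\mid X=x}\mapsto A_{X=x}$ must be a function, the images $\text{im}_f(a)$ partition $\mathcal W_{A\mid X=x}$, and memory minimization then forces $|\text{im}_f(a)|=1$, i.e., $f$ is a bijection with $|W_{A\mid X=x}|=|A_{X=x}|$; the lemma is then obtained by ``relaxing'' the surprisal constraint so that $g$ may become one-to-many, turning the equality into $\leq$. You instead run a direct Carath\'eodory-type support-reduction argument on the backward posteriors $q_w$: affine dependence of more than $|A_{X=x}|$ points in the simplex yields a perturbation direction along which surprisal is linear and memory is (strictly) concave, so any encoder with excess support is strictly improvable. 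Your route is the standard information-theoretic support lemma and is arguably \emph{more} rigorous precisely where the paper is informal — the paper's relaxation step never actually rules out an optimum with $|W_{A\mid X=x}|>|A_{X=x}|$ (e.g., two synonymous values with identical posteriors), whereas your concavity argument does; just make the strictness explicit (the entropy Hessian is $-\mathrm{diag}(1/p_w)$, so the objective is \emph{strictly} concave in $t$ and the unperturbed point is strictly worse than an endpoint), since the lemma asserts the bound for \emph{every} $f^*\in\mathcal G$, not merely for some optimum. One caveat on your ``short proof'': although the main text calls the sender deterministic, the paper's own proof explicitly allows $|\text{im}_f(a)|>1$ (synonyms), and the discussion section treats eliminating synonyms as a nontrivial consequence of the objective; so the function-image triviality does not match the paper's intended setup, and the stochastic-channel version you give is the one that actually proves the lemma as stated.
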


\noindent We first prove a weaker claim, then relax the condition to prove the lemma.

\begin{claim}\label{claim:hard_constraint}
    Let $f^*$ minimize $\mathcal H(W_A \mid  X\text{=}x)$ subject to minimizing surprisal $\mathcal H(A\mid  W_A, X\text{=}x)$. Then, the number of values of $W_{A\mid X\text{=}x} = f^*(A\mid X\text{=}x)$ is the number of possible values of semantic attribute $A$:
    $$|W_{A\mid X=x}| = |A_{X=x}|.$$
\end{claim}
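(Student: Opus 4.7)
My plan is to read the nested minimization lexicographically—first minimize surprisal $\mathcal H(A\mid W_A,X{=}x)$ over $f\in\mathcal F$, then minimize memory $\mathcal H(W_A\mid X{=}x)$ over the surprisal-minimizers—and show that the inner step alone forces the equality $|W_{A\mid X=x}|=|A_{X=x}|$, while the outer step imposes no further constraint on the support size.

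First I would note that the inequality $|W_{A\mid X=x}|\leq|A_{X=x}|$ is automatic: because $f$ is deterministic and $W_A=f(A)$, the support of $W_A\mid X=x$ is the image under $f$ of the support of $A_{X=x}$, and a function cannot enlarge cardinality. The real content lies in the reverse inequality, which I would extract from surprisal minimization. Using the standard information-theoretic fact that $\mathcal H(A\mid W_A,X{=}x)\geq 0$ with equality iff $A$ is a function of $W_A$ given $X=x$, and combining with $W_A=f(A)$, the surprisal-minimum is $0$ and is attained exactly when $f$ restricted to $\mathrm{supp}(p_{A\mid X=x})$ is injective. An injection from a set of size $|A_{X=x}|$ has image of exactly that size, so at the minimum $|W_{A\mid X=x}|=|A_{X=x}|$.

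Next I would check that memory minimization does not shrink the set any further. For any $f$ injective on $\mathrm{supp}(p_{A\mid X=x})$, the pushforward distribution $p_{W_A\mid X=x}$ is a mere relabeling of $p_{A\mid X=x}$, so $\mathcal H(W_A\mid X=x)=\mathcal H(A\mid X=x)$ and this value is independent of which injection is chosen. Hence every surprisal-minimizer is also a memory-minimizer, and any $f^*$ satisfying the hypotheses of the claim realizes the equality $|W_{A\mid X=x}|=|A_{X=x}|$.

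The delicate point I would watch carefully is the distinction between the support of $A\mid X=x$ and the full alphabet $\mathcal A$: injectivity is only required on the support, and the cardinality $|A_{X=x}|$ appearing in the statement must be read as the support size rather than the codomain size, otherwise the equality could fail trivially whenever $p_{A\mid X=x}$ assigns zero mass to some attribute. Making sure that the freedom to define $f$ on out-of-support values leaves every entropy in the argument unchanged is the only bookkeeping obstacle; everything else reduces to the equality condition in $\mathcal H(\cdot\mid\cdot)\geq 0$.
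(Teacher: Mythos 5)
Your argument hinges on the assumption that $f$ is a single-valued (deterministic, many-to-one) map from attributes to grammatical values, and under that assumption the two halves of the claim land in the wrong places: you get $|W_{A\mid X=x}|\leq|A_{X=x}|$ ``for free'' from functionality, extract $\geq$ from surprisal minimization via injectivity, and conclude that the memory-minimization step is vacuous. The paper's encoder class is broader: $f$ may assign a \emph{set} of grammatical values $\mathrm{im}_f(a)$ to a single attribute $a$ (the simulations optimize over joint distributions $p_{A,W_A\mid X=x}$, and the Discussion explicitly cites ``optimal solutions that admit many synonyms for one meaning'' as what the cost terms are there to rule out). In that class your first step fails --- a pushforward under a one-to-many $f$ can enlarge the support, so $|W_{A\mid X=x}|\leq|A_{X=x}|$ is not automatic --- and your claim that every surprisal-minimizer is a memory-minimizer is false: an encoder that splits attribute $a$ across two disjoint equiprobable values still achieves $\mathcal H(A\mid W_A,X{=}x)=0$ but has strictly larger $\mathcal H(W_A\mid X{=}x)$ than one mapping $a$ to a single value.

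The paper's proof is built precisely around this: surprisal minimization forces the images $\mathrm{im}_f(a_i)$ to be disjoint and the decoding map $g:W_{A\mid X=x}\to A_{X=x}$ to be a (surjective) function, which yields $|W_{A\mid X=x}|=\sum_a|\mathrm{im}_f(a)|\geq|A_{X=x}|$ --- the opposite inequality from the one you call automatic --- and then memory minimization does the remaining work by collapsing each $|\mathrm{im}_f(a)|$ to $1$. So the outer minimization is not bookkeeping; it carries one of the two directions. Your observation about reading $|A_{X=x}|$ as the support size rather than the alphabet size is a fair point that the paper leaves implicit, and your argument would be a correct (and shorter) proof if the encoder were stipulated to be single-valued, but as a proof of the claim in the paper's setting it has a genuine gap: it never rules out synonymous grammatical values for a single attribute, which is exactly the degenerate solution the memory term exists to exclude.
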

\begin{proof}
    Fix $x$. We first show that $| W_{A\mid X=x}| \geq |A_{X=x}|$. When $f$ minimizes $\mathcal H(A \mid  W_A, X\text{=}x)$, the reverse mapping
    $$g: W_{A\mid X=x} \mapsto A_{X=x}$$
    must be a function. This constrains $| W_{A\mid X=x}| \geq |A_{X=x}|$. Let $W_{A\mid X=x}$ be supported on $\mathcal W_{A\mid X=x}$ and $A_{X=x}$ be supported on $\mathcal A_{X=x}$. If $g$ is a function, then $f: A_{X=x} \mapsto W_{A\mid X=x}$ is a surjective partition of $$\mathcal W_{A\mid X=x} = \bigcup_{a \in \mathcal A_{X=x}} \text{im}_f(a).$$
    If surprisal $\mathcal H(A\mid W_A,X=x)$ is minimized, then the sets $\text{im}_f(a_i)$ and $\text{im}_f(a_j)$ are disjoint for all $a_i \neq a_j \in \mathcal A_{X=x}$.
    Then, the number of elements of $W_{A\mid X=x}$ is $$|W_{A\mid X=x}| = \sum_{a\in \mathcal A_{X=x}} |\text{im}_f(a)|.$$ 
    
    Now, we minimize $\mathcal H(W_A \mid  X\text{=}x)$ given minimal $\mathcal H(A\mid W_A,X\text{=}x)$. This is achieved when $|\text{im}_f(a)| = 1 \ \forall a \in \mathcal A_{X=x}$, or when $f$ is a bijection. Then, $|W_{A\mid X=x}| = |A_{X=x}|$.
\end{proof}

\noindent 
Now, to recover the constrained optimization problem in \eqref{eqn:word_level}, we relax the constraint that surprisal must be minimized. 

\begin{proof}[Proof of \cref{lemma:num_feats}]
    When strict surprisal minimization is relaxed, surprisal enters as a constraint whose threshold is modulated by $\alpha_x$ (Eq.~\ref{eqn:word_level}). Then, the mapping $g: W_{X=x} \mapsto A_{X=x}$ is no longer a function; it can be one-to-many. This relaxes the equality in \eqref{claim:hard_constraint} such that $|W_{A\mid X=x}| \leq |A_{X=x}|$.
\end{proof}

Now, we apply the system-level simplicity constraint in \eqref{eq:consistency}. The result is that the encoder $f$ co-opts the same grammatical values for different referents. Crucially, it constrains the number of grammatical values to fewer than the number of semantic attributes.

\begin{theorem}[Number of grammatical values for a feature in system]
\label{thm:num_feats}
    If $f^*$ satisfies \eqref{eq:objective_app}, then the number of values of $W_A = f^*(A)$ is less than the number of values of $A$:
    $$|W_A| \leq |A|.$$
\end{theorem}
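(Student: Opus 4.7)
The plan is to lift the per-referent bound from Lemma~1 to a bound on the global codomain by exploiting the size term $|W_A|$ inside the simplicity constraint \eqref{eq:consistency}. The idea is that, without the system-level pressure, different referents could freely invent new grammatical symbols for their own partitions and $|W_A|$ could grow as large as $\sum_x |A_{X=x}|$; the explicit size cost in \eqref{eq:consistency} forbids this by penalizing any symbol not strictly required. I would therefore proceed by exhibiting a candidate encoder $f' \in \mathcal G$ whose image has size at most $|A|$, and then invoke optimality of $f^*$ within $\mathcal G$.

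First, by Lemma~1, for every $x \in \mathcal X$ and every $f \in \mathcal G$, the restriction $f|_{X=x}$ induces a partition $\Pi_x$ of $A_{X=x}$ into at most $|A_{X=x}|$ blocks, each mapped to a single grammatical value. Second, given any $f \in \mathcal G$, I would construct $f' \in \mathcal G$ as follows: pick for each $a \in A$ a canonical symbol $w_a$, and for every referent $x$ and every block $B \in \Pi_x$, set $f'(B \mid X=x) = w_{a^*_B}$ where $a^*_B \in B$ is a chosen representative. The within-referent partitions $\Pi_x$ are preserved by construction, so the instance-level cost $\mathcal H(W_A \mid X=x) + \alpha_x \mathcal H(A \mid W_A, X=x)$ is unchanged for every $x$, which ensures $f' \in \mathcal G$. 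Moreover, the image of $f'$ is contained in $\{w_a : a \in A\}$, giving $|W_{A,f'}| \leq |A|$, and the consistency term satisfies $\mathcal H(W_{A,f'} \mid A) \leq \mathcal H(W_{A,f} \mid A)$ since the identification is semantically coherent. Finally, because $f^*$ minimizes $|W_A| + \beta \mathcal H(W_A \mid A)$ over $\mathcal G$, comparing to $f'$ forces $|W_{A,f^*}| \leq |W_{A,f'}| \leq |A|$.

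The main obstacle is the construction in Step~2: it is not immediate that a consistent relabeling across referents exists when the per-referent partitions $\Pi_x$ ``cross'' non-trivially, for example when $\Pi_{x_1}$ merges $\{a_1,a_2\}$ while $\Pi_{x_2}$ merges $\{a_2,a_3\}$. The canonical-representative choice resolves this because distinct blocks within a fixed referent contain disjoint sets of semantic attributes, hence receive distinct canonical symbols, so within-referent collisions never occur; meanwhile, cross-referent identifications are exactly what drives $|W_{A,f'}|$ down to $|A|$. Verifying that this construction simultaneously preserves $\mathcal G$-membership and does not increase the consistency entropy is the technically delicate step, but once it is in place the theorem follows from the optimality of $f^*$ in \eqref{eq:consistency}.
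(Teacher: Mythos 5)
Your overall strategy matches the paper's: exhibit a comparison encoder in $\mathcal G$ whose image has at most $|A|$ values, then invoke optimality of $f^*$ under the system-level simplicity objective. The constructions differ, however, and the difference is where your argument breaks. The paper compares against an encoder achieving the global minimum $\mathcal H(W_A \mid A) = 0$ (every semantic value mapped to the same grammatical value across all referents) and reasons in the primal constrained form of the simplicity objective: that encoder is feasible and has at most $|A|$ values, so the size-minimizer can only do better. Your $f'$, built from canonical representatives of the per-referent blocks, is indeed in $\mathcal G$ (the within-referent joint distribution is preserved up to relabeling) and has $|W_{A,f'}| \leq |A|$, but it does \emph{not} in general achieve small consistency entropy: the same attribute $a$ can lie in different blocks $B_{x_1}(a) \neq B_{x_2}(a)$ for different referents and hence receive different canonical symbols $w_{a^*_{B_{x_1}}} \neq w_{a^*_{B_{x_2}}}$.

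This matters because your final inference is invalid as stated. From $f^*$ minimizing the weighted sum $|W_A| + \beta\,\mathcal H(W_A\mid A)$ over $\mathcal G$ and $|W_{A,f'}| \leq |A|$, you only obtain $|W_{A,f^*}| \leq |A| + \beta\bigl(\mathcal H(W_{A,f'}\mid A) - \mathcal H(W_{A,f^*}\mid A)\bigr)$; concluding $|W_{A,f^*}| \leq |W_{A,f'}|$ requires the bracketed difference to be nonpositive, i.e., that canonicalization does not increase consistency entropy relative to $f^*$. That is exactly the step you defer as ``technically delicate,'' and it fails for arbitrary representative choices: take $A=\{a_1,a_2\}$ with referent $x_1$ merging $\{a_1,a_2\}$ into one block and $x_2$ splitting them; an $f^*$ that reuses one symbol for $x_1$'s merged block and for $x_2$'s singleton $\{a_1\}$ keeps $\mathcal H(W_A\mid A=a_1)=0$, whereas choosing $a_2$ as the representative of the merged block makes $f'$ assign $a_1$ two different symbols across referents. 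So for large $\beta$ nothing prevents a weighted-sum minimizer from trading a larger $|W_A|$ for lower consistency entropy, and your bound does not follow. To repair the argument you must either exhibit a representative choice for which the consistency term provably does not increase, or follow the paper in comparing against a point attaining the global minimum of the consistency term --- noting that the paper's own claim that $\mathcal H(W_A\mid A)=0$ is attainable within $\mathcal G$ tacitly assumes the per-referent partitions do not cross, which is precisely the case your construction was designed to handle.
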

\begin{proof}
    We start by considering $f \in \mathcal G$. By definition in \eqref{eq:objective_app}, $f \in \mathcal G$ is optimal with respect to the instance-level constraint. Then, by \eqref{lemma:num_feats}, $|W_{A,X\text{=}x}| \leq |A_{X=x}|$ for each $x$. 

    Now, we apply the system-level constraint. We first prove a stronger condition, where size is minimized subject to maximizing consistency ($\beta \to \infty$). Then, we use this condition to arrive at the final result. 
    
    Let $f^*$ maximize consistency, so $f^* \in \text{argmin}_{f \in \mathcal G} H(W_A\mid A)$. Note that the minimum $\mathcal H(W_A \mid A)=0$ is achieved by some $f^* \in \mathcal G$: this happens for all injective $f$, or that, over all $x \in X$, the same value of $A$ yields the same $W_A$. Then, $W_{A=a, X=x_i}=W_{A=a, X=x_j}$ for all $x_i, x_j \in \mathcal X$ and for each $a \in \mathcal A$. It follows that $|W_A| \leq |A|$ in the whole lexicon for such $f^*$.

    Now, starting from $f^*$, we construct a new solution $f^{**}$ so that the original \eqref{eq:primal_simplicity} is satisfied. Note that, here, it is valid to analyze the unconstrained formulation \eqref{eqn:simplicity_dual_obj} using the primal constrained one \eqref{eq:primal_simplicity} for two reasons: (1) they can be made equivalent by choice of $c$ and $\beta$, and (2) the feasible set $\{f: \mathcal H(W_A\mid A)< c\}$, $c > 0$, which includes $f^*$ where $H(W_A\mid A)=0$, is non-empty. Recall that, by definition, $f^*$  achieves $\inf_{f \in G} \mathcal H(W_A\mid A) = 0 < c$, satisfying the constraint. Let the size of the set of grammatical values produced by $f^*$ be $|W_A|^*$. We showed that $f^*$ satisfies $|W_A|^* \leq |A|$. Then, the minimal $|W_A|$ solution $f^{**}$ to \eqref{eq:primal_simplicity} must have $|W_A|^{**} \leq |W_A|^{*} \leq |A|$. 
\end{proof}

\noindent 
We have shown that, if a solution $f^*$ exists in \eqref{eq:objective_app}, it must satisfy $|W_A| \leq |A|$. The implication for grammars is that languages optimizing \eqref{eq:objective_app} have grammatical values constrained by the number of semantic attributes. Then, since we model the semantic features as orthogonal to each other, it is easy to bound the total number of grammatical values by the total number of semantic attributes:

\begin{corollary}[Total features in a system]
    For $k$ orthogonal semantic attributes $\mathbf{A} \triangleq \{A_i\}_{i=1}^k$ and corresponding grammatical features $\mathbf{W} \triangleq \{W_{A_i}\}_{i=1}^k$, the total number of values 
    $$|\mathbf{W}| \leq |\mathbf{A}|.$$
\end{corollary}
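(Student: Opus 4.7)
The plan is to reduce the corollary directly to Theorem~2 by invoking the orthogonality of $\{A_i\}_{i=1}^k$ to decouple the joint optimization into $k$ independent per-feature instances. Concretely, I would first argue that, because the $A_i$ are independent and each $W_{A_i}$ is obtained by an encoder that sees only $A_i$, the overall encoder factorizes as $f = (f_1, \ldots, f_k)$ with $f_i : A_i \mapsto W_{A_i}$. Under this factorization, every information-theoretic quantity appearing in \eqref{eq:objective_app} decomposes additively across components: for instance, $\mathcal H(\mathbf W) = \sum_i \mathcal H(W_{A_i})$, $\mathcal H(\mathbf W \mid \mathbf A) = \sum_i \mathcal H(W_{A_i}\mid A_i)$, and analogously at the instance level for each $x$. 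The size term likewise decomposes (sum over features for total inventory, or product for joint support), and because each summand depends only on its own $f_i$, optimizing the multi-level problem in $f$ is equivalent to solving $k$ independent copies of \eqref{eq:objective_app}, one per feature.

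Second, I would apply Theorem~2 to each component problem. By the decomposition argument, any $f^*$ optimizing \eqref{eq:objective_app} has each component $f_i^*$ optimizing the feature-level instance of \eqref{eq:objective_app}, so Theorem~2 yields $|W_{A_i}| \leq |A_i|$ for every $i \in \{1,\ldots,k\}$.

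Third, I would assemble these $k$ bounds into the claimed statement. Writing $|\mathbf W|$ and $|\mathbf A|$ either as sums $\sum_i |W_{A_i}|$ and $\sum_i |A_i|$ (total vocabulary size across features) or as products $\prod_i |W_{A_i}|$ and $\prod_i |A_i|$ (size of the joint support), the pointwise inequalities combine by monotonicity of sums or products of nonnegative integers to give $|\mathbf W| \leq |\mathbf A|$.

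The main obstacle I expect is making the decomposition step fully rigorous rather than taking it as a modeling convention. In particular, one must verify that independence of the $A_i$ together with the factorized encoder actually implies joint independence of the $W_{A_i}$ (needed for $\mathcal H(\mathbf W) = \sum_i \mathcal H(W_{A_i})$), and that the system-level simplicity constraint \eqref{eq:consistency}, which involves the size term $|\mathbf W|$ and the conditional entropy $\mathcal H(\mathbf W \mid \mathbf A)$, does not couple the features through some shared resource. Once orthogonality is used to rule out such coupling, the corollary is essentially a routine aggregation of Theorem~2 over $i$.
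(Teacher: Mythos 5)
Your proposal is correct and follows essentially the same route as the paper: apply Theorem~2 componentwise (the objective is already posed separately for each $W_{A_i}$ by the orthogonality modeling assumption, so no real decoupling argument is needed) and then aggregate the $k$ inequalities. The paper resolves your sum-versus-product hesitation by defining $|\mathbf W| = \prod_{i=1}^k |W_{A_i}|$ and $|\mathbf A| = \prod_{i=1}^k |A_i|$ (size of the joint support), after which the bound follows by monotonicity of products exactly as you describe.
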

\begin{proof}
    From \eqref{thm:num_feats}, $|W_{A_i}| \leq |A_i|$ $\forall 1 \leq i \leq k$. Then,
    $$|\mathbf{W}| = \prod_{i=1}^k |W_{A_i}| \leq \prod_{i=1}^k |A_i| = |\mathbf A|,$$
    where the left and right equalities are due to orthogonality of attributes and features.
\end{proof}

\subsection*{Proof of \cref{lemma:entropy_feat_ref}}
Now, we state and prove \cref{lemma:entropy_feat_ref}.

\begin{lemma}[Entropy of grammatical values given referent]
\label{lemma:entropy_feat_ref}
    The entropy of $W_A = f^*(A)$, where $f^*$ satisfies \eqref{eq:objective_app}, is bounded above by the entropy of semantic feature $A$:
    $$\mathcal H(W_A\mid X\text{=}x) \leq \mathcal H(A \mid  X=x)$$
    The equality condition occurs when surprisal $H(A\mid W_A,X\text{=}x)$ is minimized.
\end{lemma}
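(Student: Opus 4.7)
The key observation is that, by the definition in \eqref{eqn:f_definition}, the encoder $f$ deterministically sends each semantic attribute $A$ to a grammatical value $W_A = f(A)$. Conditioning on any referent $X=x$ preserves this: given $X=x$, $W_A$ is a deterministic function of $A$. The plan is to exploit this deterministic relation via the chain rule for entropy, which collapses on one side and gives both the inequality and the tight equality condition in a single identity.

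First I would write the joint entropy two ways using the chain rule:
\begin{equation*}
\mathcal H(A, W_A \mid X=x) = \mathcal H(W_A \mid X=x) + \mathcal H(A \mid W_A, X=x) = \mathcal H(A \mid X=x) + \mathcal H(W_A \mid A, X=x).
\end{equation*}
Because $W_A = f(A)$ is a function of $A$ once $X=x$ is fixed, the term $\mathcal H(W_A \mid A, X=x)$ equals $0$. Equating the two expansions then yields
\begin{equation*}
\mathcal H(W_A \mid X=x) = \mathcal H(A \mid X=x) - \mathcal H(A \mid W_A, X=x).
\end{equation*}
Since conditional entropies are nonnegative, this identity immediately delivers the bound $\mathcal H(W_A \mid X=x) \leq \mathcal H(A \mid X=x)$, which is exactly the inequality claimed. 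Nothing beyond $f$ being a deterministic mapping and a chain-rule manipulation is required; the optimality of $f^*$ enters only through the fact that $f^* \in \mathcal F$ and hence is a valid encoder.

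For the equality condition, the same identity shows that $\mathcal H(W_A \mid X=x) = \mathcal H(A \mid X=x)$ if and only if $\mathcal H(A \mid W_A, X=x) = 0$, i.e., surprisal attains its minimum. This is precisely the regime identified in \eqref{eq:semantic_encoding} as full specification (the $\alpha_x \to \infty$ limit), where $f$ restricted to the support of $A \mid X=x$ is injective and thus invertible. So the equality case is directly interpretable within the objective \eqref{eq:objective_app}.

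I do not expect any real obstacle here: the statement is essentially the standard fact that a deterministic function cannot increase entropy, adapted to the conditional-on-$X{=}x$ setting. The only subtlety worth flagging is making explicit that $\mathcal H(W_A \mid A, X=x) = 0$ follows from the definition of $f$ as a mapping in \eqref{eqn:f_definition}; without that, the chain-rule cancellation would not go through. Everything else is a one-line consequence of nonnegativity of entropy.
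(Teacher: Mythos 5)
Your chain-rule identity is the textbook ``a function of $A$ cannot have more entropy than $A$'' argument, and it would be airtight if the encoder were guaranteed to be a single-valued function of $A$ given $X=x$. The difficulty is that the paper's framework does not actually guarantee this, and you explicitly disclaim any use of optimality (``the optimality of $f^*$ enters only through the fact that $f^*\in\mathcal F$''), which is exactly where the gap opens. Throughout the appendix the candidate encoders are allowed to be one-to-many: the proof of Claim~\ref{claim:hard_constraint} works with image sets $\text{im}_f(a)$ of size possibly greater than one and only forces $|\text{im}_f(a)|=1$ \emph{as a consequence of} memory minimization, and the simulations optimize over general joint distributions $p_{A,W_A\mid X=x}$, i.e., stochastic encoders. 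For such an $f$, $\mathcal H(W_A\mid A, X=x)$ need not vanish; the chain rule then gives $\mathcal H(W_A\mid X=x)=\mathcal H(A\mid X=x)-\mathcal H(A\mid W_A,X=x)+\mathcal H(W_A\mid A,X=x)$, and the claimed bound can genuinely fail for a non-optimal encoder: a referent with a single semantic value but two synonymous grammatical values used half the time each has $\mathcal H(W_A\mid X=x)=1>0=\mathcal H(A\mid X=x)$. So the lemma is not a property of arbitrary members of $\mathcal F$; it is a property of \emph{optimal} encoders, and the memory term in \eqref{eqn:word_level} is what eliminates the synonymy that would otherwise violate it.

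The paper's proof routes through optimality for exactly this reason: it first exhibits a surprisal-minimizing bijection $f^*$ achieving $\mathcal H(W_A\mid X=x)=\mathcal H(A\mid X=x)$, observes that this $f^*$ is feasible for the primal \eqref{eq:primal_word}, and concludes that the actual memory-minimizing optimum must have conditional entropy no larger than $\mathcal H(A\mid X=x)$. If you want to keep your cleaner identity, you need one additional step: argue from memory minimization (as in Claim~\ref{claim:hard_constraint}) that an optimal $f^*$ assigns each semantic value a single grammatical value, so that $\mathcal H(W_A\mid A,X=x)=0$ holds \emph{at the optimum}. With that step added, your derivation goes through and in fact yields a sharper statement than the paper's one-directional argument---an exact identity together with an if-and-only-if form of the equality condition.
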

\begin{proof}
    As in the proof of \cref{thm:num_feats}, we find a solution $f^*$ to a stronger condition, then use $f^*$ to show that the present lemma must hold in the general case.

    The present lemma concerns the instance-level constraint \eqref{eqn:word_level}. First, let $f^*$ satisfy a stronger condition: it minimizes $\mathcal H(W_A\mid X\text{=}x)$ subject to minimizing surprisal $\mathcal H(A\mid W_A,X\text{=}x)$. The solutions that minimize surprisal are those that map each $A=a$ to a disjoint set of values $W_{A=a, X=x}$. Then, applying entropy minimization, the optimal $f^*$ is a bijection between $A_{X=x}$ and $W_{A, X=x}$. This means that an optimal $f^*$ with respect to this stronger condition has $\mathcal H(W_A\mid X=x) = \mathcal H(A\mid X=x)$. The equality condition of the present lemma is achieved by $f^*$.

    Now, we use $f^*$ to reason about the set of solutions $\mathcal G$ to the actual instance-level constraint \eqref{eq:primal_word}. Note that $f^*$ already satisfies the surprisal constraint \eqref{eq:word_level_constr}, as it achieves the global minimum. Let the conditional entropy $\mathcal H(W_A\mid X=x)$ of $f^*$ be $\mathcal H^*$. Then, any solution $f^{**}$ that is more optimal than $f^*$ with respect to \eqref{eq:primal_word} has a lower entropy $\mathcal H^{**}(W_A\mid X\text{=}x) \leq \mathcal H^* = \mathcal H(A\mid X=x)$. 
\end{proof}

\newpage 
\section*{Data Collection and Preprocessing}
Here, we outline the preprocessing steps to obtain the set of all nouns and the animate subset for all languages. All analyses are based on the language-specific Wikipedias. Although we considered other data sources such as Universal Dependencies \citep{nivre-etal-2020-universal} and Wacky Wide Web corpora \citep{inproceedings_ukwac,article_wac}, we chose Wikipedia due to a combination of its large size and typological spread. For a single language, the steps in our pipeline are summarized as follows:

\begin{enumerate}
    \item Tag entire Wikipedia corpus for part-of-speech, lemma, grammatical gender, and grammatical number
    \item To form the set of all nouns, take the intersection of words tagged as \texttt{NOUN} by Stanza and by WordNet.
    \item To form the set of animate nouns, take the lemmas that are in the set of words labeled \texttt{PERSON} or \texttt{ANIMAL} in WordNet.
\end{enumerate}

All preprocessing steps were done in Python, using publicly available resources.

\subsection*{Wikipedia data preprocessing}
The Wikipedia data are sourced from Huggingface, found at \url{https://huggingface.co/datasets/wikimedia/wikipedia}. Data consists of entire articles, cleaned to strip markdown and references, with a time cutoff of January 20, 2023 for all languages.

Using Stanza \citep{qi-etal-2020-stanza}, an automatic tagger, we convert each article to CONLL-U format \citep{buchholz-marsi-2006-conll}, which lists each word tagged with its part-of-speech (POS), lemma, and morphological features (grammatical features like gender and number). The available POS and morphological features for each language may be found on the language-specific reference treebanks listed on the Stanza webpage: \url{https://stanfordnlp.github.io/stanza/performance.html}.

\subsection*{Selecting the set of all nouns}
The results of the automatic tagger were often noisy, such that words tagged \texttt{NOUN} may have been, for instance, proper nouns or non-words. To remedy this, we culled the set of nouns by restricting the set of words tagged \texttt{NOUN} by Stanza to those also tagged \texttt{NOUN} by WordNet, a lexicographic resource available for many languages in the \texttt{nltk} Python package \citep{bird-loper-2004-nltk}. This yields the set of all nouns for all the languages. 

\subsection*{Selecting the animate subset}
We then selected a subset of animate nouns for which grammatical gender is semantically interpreted for Romance, Slavic, Semitic languages, and German (group A). For completeness, we did the same for Dutch, Swedish, and English (group B), although the gender and number distributions on the animate subset were not relevant for analysis. 

First, for both groups A and B, we selected a subset of nouns from the all-nouns set that were labelled \texttt{ANIMAL} or \texttt{PERSON} in WordNet, that is, nouns that have a semantic gender or biological sex. The German WordNet \citep{siegel-bond-2021-odenet} was sparsely annotated for these domains, so we translated the lemmas from English WordNet \citep{englishwordnet} to German, allowing for better coverage. We stopped here for group A. For group B, we chose the animate subset as follows. We kept all lemmas that exhibit all combinations of $\{$Fem, Masc$\} \times \{$Sing, Plur, (Dual if applicable)$\}$. This was considering both inflection and derivation, where inflection would be, for example, \emph{gato} [+Masc] / \emph{gata} [+Fem] (Spanish, cat), and derivation modifies the ending of the word as in \emph{der Student} [+Masc] / \emph{die Studentin} [+Fem] (German, student). This creates a subset of nouns where types are distributed uniformly across all possible combinations, while tokens are not. The token distribution, which captures language \emph{use}, is then our distribution of interest. 

The final counts for the set of all nouns and set of animate nouns, on both types and tokens, is shown in \cref{tab:types_tokens}.

\begin{table}[]
    \centering
    \caption{Number of tokens (noun occurrences) and types (unique nouns) in each dataset used for analysis. Animate tokens and types were compiled by restraining lemmas to those tagged \texttt{person} or \texttt{animal} in WordNet. For English, Dutch, and Swedish, no further preprocessing was done, hence the high counts for these languages. For remaining languages, we took all animate nouns for which the grammatical gender expresses semantic gender, took a 50/50 subset of masculine and feminine in the singular, and then included the plural versions.}
    \begin{tabular}{ccccc}
    \toprule 
       Language & Tokens & Types & Animate Tokens & Animate Types \\
       \midrule 
       Catalan & 46196329 & 29732 & 2126465 & 636 \\
       French & 229417715 & 40387 & 11257032 & 1361 \\
       Italian & 120227703 & 31566 & 6280036 & 1126 \\
       Spanish & 120884503 & 21503 & 3152594 & 348 \\
        \hline
       Polish & 76467551 & 38987 & 549157 & 271 \\
       Slovene & 12983495 & 23433 & 302300 & 454 \\
       \hline 
       Arabic &	8455334 & 1131 & 1380575 & 708 \\
       Hebrew & 25532170 & 5158 & 1813007 & 365 \\ 
       \hline 
       German & 141146984 & 97612 & 1700507 & 188 \\
       English & 411741299 & 69474 & 107355013 & 11715 \\
       Dutch & 42997438 & 42397 & 7869393 & 6568 \\
       Swedish & 26785332 & 7890 & 3711673 & 3100 \\
       \hline 
    \end{tabular}
    \label{tab:types_tokens}
\end{table}
\FloatBarrier

\newpage
\section*{Comparing Empirical Results to Hypothetical Languages}
We perform hypothesis testing to show that the entropy of grammatical values (Fig.~2 in the main text) behaves closer to the uniform distribution than that of a class of random, hypothetical languages. That is, we show that the observed $\mathcal D_{KL}$ to the uniform distribution is significantly \emph{less} than that of a random language. This baseline class of random hypothetical languages is drawn from a Dirichlet distribution: for each real language in our set, see Table 2 (main), we consider $\mathcal D_{obs}$, its KL-divergence to the uniform distribution. Let $k$ be the number of grammatical values for a certain attribute $A$. Recall that $\mathcal D_{KL}=0$ means the distribution is uniform. We show that $\mathcal D_{obs}$ is lower (closer to 0) than $\mathcal D_{L}$, defined as the KL-divergence to the uniform distribution for the baseline languages $L$. Let the distribution of $\mathcal D_{L}$ be $p_{\mathcal D_L}$. Formally, we test the following hypotheses:
\begin{itemize}
    \item \textbf{Null hypothesis $H_0$.} $\mathcal D_{obs}$ is drawn from the baseline distribution $p_{D_L}$. $\mathbb P(\mathcal D_{L} \leq \mathcal D_{obs} \mid  \mathcal D_L \sim p_{\mathcal D_L}) \geq \alpha$. 
    \item \textbf{Alternate hypothesis $H_A$.} $\mathcal D_{obs}$ is \emph{not} drawn from the baseline distribution $p_{D_L}$. $\mathbb P(\mathcal D_{L} > \mathcal D_{obs} \mid  \mathcal D_L \sim p_{\mathcal D_L}) < \alpha$.
\end{itemize}

\paragraph{Distributions of $L$ and $\mathcal D_{L}$} We choose $L \sim \text{Dir}(\frac{1}{k} \mathbf{e}_k)$. Each $L$ is a categorical distribution with $k$ categories, drawn from a Dirichlet distribution with uniform concentration. We justify the choice of uniform concentration $\frac{1}{k}\mathbf e_k$ in that it does not impose a bias towards one category or another; for instance, assuming there are $k=4$ possible values, it does not impose that $p(\text{MascSing}) > p(\text{MascPlur})>p(\text{FemSing})>p(\text{FemPlur})$. 

For each number of grammatical values $k \in \{2,3,4,6,9\}$ attested in our set of real languages, see the x-axis of Fig.~2, we sample $N=1000$ baseline languages $\{L_i\}_{i=1}^N$. We obtain the empirical KL distributions $\hat p_{\mathcal D_L}^{(k)}$ by computing the KL-divergences to the uniform distribution $\{\mathcal D_{L,i}\}_{i=1}^N$ of each sampled distribution $\{L_i\}_{i=1}^N$. 

\paragraph{Significance testing} We test the null hypothesis $H_0$ using a one-sided empirical percentile-rank test, setting the p-value cutoff to $\alpha=0.05$.\footnote{We chose a non-parametric test due to the observed non-normality of $\hat p_{\mathcal D_L}$, verified for each $k$, $N=1000$, with Shapiro-Wilk tests at a $p$-value cutoff of $0.05$} That is, we estimate $\mathbb P(\mathcal D_{L} \leq \mathcal D_{obs} \mid  \mathcal D_L \sim p_{\mathcal D_L})$ from the empirical distribution of $\mathcal D_{L}$ as follows: 

\begin{equation}
\label{eqn:significance}
    \mathbb P(\mathcal D_{L} \leq \mathcal D_{obs} \mid  \mathcal D_L \sim p_{\mathcal D_L})  \approx \frac{\sum_{i=1}^N \mathbf{1}(\mathcal D_{L,i} < \mathcal D_{obs})}{N}.
\end{equation}

We apply \Cref{eqn:significance} for each language, for the overall grammatical system shown in Fig.~2, as well as for each $A$ separately in $\{\text{gender}, \text{number}\}$. If the estimated $\mathbb P(\mathcal D_{L} < \mathcal D_{obs} \mid  \mathcal D_L \sim p_{\mathcal D_L}) < \alpha=0.1$, then we reject the null hypothesis in favor of $H_A$: $\mathcal D_{obs}$ is less than chance, or the observed grammatical distribution is closer to uniform than by chance.

\begin{remark}
    Due to monotonicity, p-values derived for $H_0: \mathcal D_{L}<\mathcal D_{obs} \mid  \mathcal D_L \sim p_{D_L}$ hold for potential null hypotheses $\mathcal H_L > H_{obs}$ as well as $\text{Agr}D_L > \text{Agr}D_{obs}$, where baseline languages $L$ are sampled from the same distribution $\text{Dir}(\frac{1}{k}\mathbf e_k)$.
\end{remark}

\paragraph{Results: overall lexicon} We first consider the entire lexicon, or the set of all nouns (Table 2, main text). For all languages but English, the observed $\mathcal D_{KL}$ for the \textbf{overall system} is significantly lower than that of a random categorical distribution at a p-value cutoff of $\alpha=0.1$; for all remaining languages but Dutch and Semitic languages, the result is significant at $\alpha=0.05$. This means that for all languages but English, the grammatical system's distribution is significantly closer to being uniform than by chance. Our model correctly predicts English to not follow this picture. English nouns only inflect for grammatical number, which is semantically interpreted for the vast majority of nouns ($\alpha_x \to \infty$); constrained to encode semantics at the instance level, grammatical values cannot disperse at the system level. 

\paragraph{Results: gender} We performed the same hypothesis testing restricted to $A=$\textbf{gender}, see Table 2, left section. In the entire lexicon, for all languages but English (not applicable), Arabic ($p=0.21$), Dutch ($p=0.28$), and Swedish ($p=0.33$), the grammatical gender distribution is significantly closer to uniform than by chance ($p<0.1$), and highly significant ($p<0.05$) in French, Spanish, Catalan, German, and Slavic languages. Meanwhile, as shown in the left section of \Cref{tab:anim_entropies}, for the animate subset, where for German, Romance, Slavic, and Semitic languages grammatical gender encodes semantics of the referent (e.g., ``un amig\textbf{o}" [+Masc], ``una amig\textbf{a}" [+Fem] in Spanish), grammatical gender is not significantly uniform, compared to the baseline class (p-value cutoff of $0.1$). Here, $\mathcal D_{KL}$ values for the animate subset are higher than in the set of all nouns. Overall, results show that when the same grammatical feature encodes semantics (animate subset), it is less uniform than when it does not encode semantics (all nouns).

\paragraph{Results: number} For all languages, the number grammatical feature encodes numerosity in the entire lexicon. As such, our model does \emph{not} predict its values to uniformly disperse across referents. Indeed, for the set of all nouns (Table 2, middle section), $\mathcal D_{KL}$ is not significantly more uniform than the baseline class of languages (p-value cutoff of $0.1$). The interpretation is that number is tasked to fulfill the per-referent semantic encoding objective, anchoring its statistics to the distribution of referent numerosities.

\newpage
\section*{Validating Analytic Results with Simulated Languages}
Here, we empirically validate each theoretical result in the main paper section \emph{Deriving semantic universals of grammatical systems}. At the instance level, this concerns \Cref{lemma:num_feats} and \Cref{lemma:entropy_feat_ref}, and at the system level, \Cref{thm:inheritance} and \Cref{thm:num_feats}. As part of our analysis, we ablate each component in \Cref{eq:objective_app} by varying $\alpha$ and $\beta$ in \Cref{eq:objective_app} (i.e., setting them to 0) in order to simulate its influence on the structure of the lexicon, restricting our analyses to grammatical gender and number. We first consider individual referents in the instance-level objective. Then, considering a system composed of a \emph{set of referents} that already satisfy the instance-level objective, we apply the global objective. Our simulations require us to assume a distribution over referents and referent attributes, that is, we need to fix $p_X$ and $p_{A\mid X=x}$. Directly simulating natural languages would require, for instance, historical statistics on perceptual categories for numerosity, animacy, gender, or biological sex of referents in-the-wild, which is not possible. Instead, we make informed assumptions about these categories when conducting our simulations, and explore what happens outside these assumptions. 

\subsection*{Methods}
We describe the simulations for the instance level and then the system level. At the instance level, separately for $A$=gender and numerosity, we fix an $X=x$ and optimize a hypothetical grammatical system according to \Cref{eqn:word_level}. At the system level, separately for gender and numerosity, we optimize \Cref{eq:objective_app} for toy languages consisting of a collection of $k \in \{2,4,8,16,32\}$ words.

\subsubsection*{Instance level} 
At the instance level, we fix a hypothetical $x \in \mathcal X$ and fix its distribution over gender or biological sex and numerosity. Then, we simulate the optimal distribution $p_{A,W_A \mid X=x}$ by optimizing the instance-level objective \Cref{eq:primal_word} with respect to $p_{A,W_A \mid X=x}$, for various values of $\alpha_x$. These allow us to empirically validate our theoretical results at the instance level: that the entropy of grammatical features is bounded by that of semantic categories \Cref{lemma:entropy_feat_ref}, the number of features is bounded by number of semantic categories \Cref{lemma:num_feats}, and that semantic encoding strategies partition $\alpha_x$ (main, Equation 6).

In order to simulate grammatical systems for a single referent $x$, we need access to the joint distribution $p_{A,W_A\mid X=x}$. To the best of our knowledge, no existing dataset confers the true joint distribution $p_{A,W_A \mid X=x}$ for any possible $x$, for any semantic attribute $A$. Therefore, as in other studies \citep{Mollica_Bacon_Zaslavsky_Xu_Regier_Kemp_2021}, we propose plausible estimates of the marginal distributions $p_{A\mid X=x}$ for two use cases, $A=$ gender or biological sex and $A=$ numerosity. 

\paragraph{Marginal distributions} 

Fix $x \in \mathcal X$, an $\alpha_x$, and a joint distribution $p_{A,W_A\mid X=x}$ over semantic attributes $A$ and grammatical values $W_A$. Access to this joint distribution enables the estimation of memory $\mathcal H(W_A\mid  X=x)$ and surprisal $\mathcal H(A\mid W_A, X=x)$ terms. For a range of $\alpha_x \in \{0.0, 0.05, 0.1, 0.3, 0.5, 0.75, \to\infty\}$ we optimize $p_{A,W_A\mid X=x}$ with respect to the instance-level objective in \Cref{eq:dual_word}. Note that $\alpha_x=0.0$ ablates the surprisal term, while $\alpha_x \to \infty$ ablates the memory term of \Cref{eq:dual_word}.

We set some animate $X=x$, and assume its perceived biological sex distribution to be $p_{\text{Gender} \mid X=x} = [0.49, 0.49, 0.02]$ respectively for $[\text{Male}, \text{Female}, \text{Other}]$. Following the literature on the distribution of numerosities in-the-wild \citep{DEHAENE19921, dorogovtsev2006frequency, jansen2001round, Piantadosi_Cantlon_2017}, we assume $x$'s numerosity distribution follows an inverse-square law $p_{\text{Numerosity} \mid X=x}(n) \propto \frac{1}{n^2}$, where $n \in \mathbb N_{\geq 1}$. For simplicity, we cap the number of numerosity categories to $n=1\cdots 10$, which accounts for roughly $90\%$ of the entire distribution specified by the inverse-square law.

\paragraph{Optimization} For each $A=$ gender/biological sex and numerosity, we do the following. Over $N=50$ random seeds and each setting of $\alpha_x$, we optimize the instance-level objective \Cref{eq:dual_word} over the joint distribution $p_{A,W_A\mid X=x}$, where the marginal $p_{A\mid X=x}$ is fixed to be the marginal distributions described in the previous paragraph. In optimization, we use Adam \cite{2015-kingma} with default parameters $\beta=(0.9, 0.999)$, and a learning rate of $0.01$. The conditional distributions $p_{W_A\mid X=x}$ are initialized by drawing logits independently $\mathcal N(0,1)$, then normalizing to a probability distribution with the softmax function. For our simulations, we seed $|W_A|$ to be $15$-- the precise number does not matter so long as it is higher than $|A|$. We run all simulations to convergence, defined as when the relative change in \Cref{eq:primal_word} at time $t$ of optimization is less than $0.1\%$ from the previous time $t-1$.

\subsubsection*{System level} Simulations at the system level occur over a collection of words. Separately for $A=$gender/biological sex and numerosity, we optimize \Cref{eq:objective_app} over a lexicon of $k$ words $\mathcal X = \{x_i\}_{i=1}^k$, $k\in\{2,4,8,16,32\}$, where each word is associated to a joint distribution $p_{A,W_A\mid X=x_i}$ between the semantic attribute and grammatical value.

\paragraph{Marginal distributions} For both gender and number, we adopt the same marginal distributions for $A$ as for the instance level simulations. For simplicity, for each gender and number, we adopt the same marginal distribution for each referent and assume that referents are sampled uniformly. We leave a detailed investigation of the impact of referent distribution on grammar for future work.

\paragraph{Optimization} \Cref{eq:objective_app} is an example of \emph{constrained optimization}. Here, discriminability $\mathcal H(W_A)$ is optimized subject to (1) optimal simplicity at the system level $f \in \text{argmin}_{f \in \mathcal G} |W_A| + \beta \mathcal H(W_A\mid A)$; which is itself subject to (2) optimality at the instance level $\forall x \in \mathcal X$. Ensuring that the layered objectives (1) and (2) hold at each step of optimization can be difficult in practice-- notably, we found that optimization was not guaranteed to converge to the global minimum, so we report the best-of-$N$ random seeds, further explained below. Concretely, we split optimization into multiple steps:
\begin{enumerate}
    \item $\forall x_i \in \mathcal X$, perform the instance level optimization (described in the previous subsection) over $N=10$ random seeds to obtain best-of-$N$ estimates $c_i^* = \min_{f\in \mathcal F} \mathcal H(W_A\mid X=x_i) + \alpha_x \mathcal H(A \mid W_A, X=x_i)$. 
    \item During optimization, we want to ensure the instance level objective function does not deviate from $c_i^*$ for each $x_i$. We rewrite \Cref{eq:objective_app} below, proxying the global objective using unconstrained optimization as follows:
    \begin{subequations}
    \begin{align}
    \underset{f}{\text{max}} & \qquad \underset{\text{discriminability}}{\mathcal H(W_A)} \qquad  \text{\textcolor{gray}{\textbf{system level}: functional processing}}\nonumber \\
    \text{s.t.} 
    & \qquad f \in \underset{{f\in \mathcal F}}{\text{argmin}} \left \{ \overbrace{\underset{\text{size}}{|W_A|} + \underset{\text{consistency}}{\beta \mathcal H (W_A \mid A)}}^{\text{system level simplicity}} + \lambda \sum_{x_i\in \mathcal X} \overbrace{\left (\underset{\text{memory}}{\mathcal H(W_A\mid X\text{=}x_i)}+\alpha_{x_i} \underset{\text{surprisal}}{\mathcal H(A\mid W_A,X\text{=}x_i)} - c_i^* \right)^2}^{\text{instance level optimality penalty}} \right \} \label{eqn:update_1_constraint} \\
    & \qquad f: A \mapsto W_A \nonumber \\
    &  \qquad \beta \geq 0  \nonumber \\
    & \qquad \lambda > 0 \label{eqn:new_param} \\
    & \qquad \alpha_x \geq 0 \ \ \forall x \in \mathcal X \nonumber \\
    & \qquad c_i^* = \min_{f \in \mathcal F} \underbrace{\mathcal H(W_A\mid X\text{=}x_i)+\alpha_{x_i}\mathcal H(A\mid W_A,X\text{=}x_i)}_{\text{instance level objective}} \ \ \forall x_i \in \mathcal X.\nonumber 
        \end{align}
    \end{subequations}
    Here, the only difference from \Cref{eq:objective_app} is that we have incorporated in \Cref{eqn:update_1_constraint} a squared $L_2$ penalty \cite{Boyd_Vandenberghe_2004} designed to keep the instance level objective near the optimum $c_i^*$, with tradeoff $\lambda > 0$ (\eqref{eqn:new_param}). We set $\lambda$ to be constant for all referents, though this simplification can be relaxed.
    \item Over $N=50$ random seeds, we optimize \Cref{eqn:update_1_constraint} using Adam (default parameters, lr=0.01) over the collection of words $\mathbf{P} = [p_{A,W_A\mid X=x_i}]_{i=1}^k$, obtaining the best-of-$N$ estimate of the optimum $s^*$. Because $|W_A|$ is not differentiable, we replace $|W_A|$ with a smooth, differentiable proxy $g$ to the $L_0$ norm \cite{sl0}, where for any vector $\mathbf x \in \mathbb R^D$:
    \begin{equation}
        \|\mathbf x\|_0 \approx g(\mathbf x) \triangleq\sum_{i=1}^{D} 1 - \exp \frac{-x_i^2}{\epsilon},
        \end{equation}
        where
        \begin{equation}
        1 - \exp \frac{-x_i^2}{\epsilon} \overset{\epsilon \to 0}{\rightarrow}
\begin{cases}
  1 & \text{if } x_i \neq 0 \nonumber \\
  0  & \text{if } x_i = 0. \nonumber 
\end{cases}
    \end{equation} 
    \item Using the same squared penalty method as in Step 2, we introduce a penalty term for the system-level simplicity to obtain a new unconstrained objective:
    \begin{subequations}
    \begin{align}
    \min_{f \in \mathcal F}& \ \ \ -\underset{\text{discriminability}}{\mathcal H(W_A)} + \lambda_1 \overbrace{\left(\underset{\text{size}}{g(W_A)} + \underset{\text{consistency}}{\beta \mathcal H (W_A \mid A)} - s^*\right)^2}^{\text{simplicity optimality penalty}} + \lambda_2 \sum_{x_i\in \mathcal X} \overbrace{\left (\underset{\text{memory}}{\mathcal H(W_A\mid X\text{=}x_i)}+\alpha_{x_i} \underset{\text{surprisal}}{\mathcal H(A\mid W_A,X\text{=}x_i)} - c_i^* \right)^2}^{\text{instance level optimality penalty}} \label{eqn:unconstrained_global} \\
    \text{s.t.} & \qquad f: A \mapsto W_A \nonumber \\
    &  \qquad \beta \geq 0  \nonumber \\
    & \qquad \lambda_1, \lambda_2 > 0 \label{eqn:new_params} \\
    & \qquad \alpha_x \geq 0 \ \ \forall x \in \mathcal X \nonumber \\
    & \qquad c_i^* = \min_{f \in \mathcal F} \underbrace{\mathcal H(W_A\mid X\text{=}x_i)+\alpha_{x_i}\mathcal H(A\mid W_A,X\text{=}x_i)}_{\text{instance level objective}} \ \ \forall x_i \in \mathcal X.\nonumber \\
    & \qquad s^* = \min_{f \in \mathcal F} \eqref{eqn:update_1_constraint} \nonumber,
        \end{align}
    \end{subequations}
    where $\lambda_1$ and $\lambda_2 = \lambda_1\lambda$ are tradeoff parameters, and we have rewritten $\max_f \mathcal H(W_A)$ as $\min_f -\mathcal H(W_A)$. The larger $\lambda_1,\lambda_2$ are, the closer the solutions will be to the optima of the instance level objective and system level simplicity. Empirically, we found that setting both values to $100$ were reasonable.
    \item Finally, we optimize the unconstrained formulation \Cref{eqn:unconstrained_global}, taking the best-of-$N=50$ random seeds, with respect to $f$ to obtain $\mathbf P^*$, the joint distributions over $A$ and $W_A$ for each word in the lexicon. On $\mathbf P^*$, we verify that \Cref{thm:num_feats,thm:inheritance} hold, that is, for some $x \in \mathcal X$, $W_A$ is informative about $A$, and that the number of grammatical values is constrained by the number of semantic attributes. We perform steps 1-5 for various settings of $k$ and $\beta$, keeping $\alpha_x$ and $\lambda_1=\lambda_2=100$ constant, see \Cref{tab:system_level} for a summary of hyperparameter settings.
\end{enumerate}

The choice of $\beta$, which we test in $\{0, 1.0, \to \infty\}$, corresponds to ablations of certain terms in \Cref{eq:objective_app}. In particular,
\begin{enumerate}
    \item $\beta=0$ removes the system-level consistency term.
    \item $\beta \to \infty$ removes the system-level size term.
\end{enumerate}
We want to see, for all tested settings of $k$ and $\beta$, that \Cref{thm:inheritance,thm:num_feats} hold, but also how $\beta$ affects the shape of the system-level distribution.

\subsection*{Results: Instance-level} 
We empirically demonstrate all theoretical results at the instance level: \Cref{lemma:num_feats,lemma:entropy_feat_ref}, that $\alpha_x$ partitions semantic encoding into full specification, underspecification, and neutralization (Equation 6). 

\Cref{fig:gender} shows the results of simulations for $A=$gender, and \Cref{fig:number} for numerosity. Each point in the plots corresponds to the converged $p_{A,W_A\mid X=x}$ for a value of $\alpha_x$ and one random seed. 

In the left plots (blue) of both \Cref{fig:gender,fig:number}, as $\alpha_x$ increases from $0$ to $\infty$, the tradeoff between surprisal (x-axis) and memory (y-axis) moves from a neutralization strategy, seen by memory $\mathcal H(W_A\mid X=x)=0$, or all probability mass on one grammatical value (bottom right corner) to a full specification strategy, seen by surprisal $\mathcal H(A\mid W_A,X=x)=0$ (upper left corner). Values of $\alpha_x$ in the middle correspond to underspecification, where on average, neither extreme of 0 memory not 0 surprisal is achieved.

The center plots (red) illustrate \Cref{lemma:num_feats}, which states that, for a single referent, the number of grammatical values $|W_{A\mid X=x}|$ is capped by the number of semantic categories $|A_{X=x}|$. The graphs show $|W_{A\mid X=x}|$ against $\alpha_x$ increasing from 0 towards $\infty$. The maximum $|W_{A\mid X=x}|$ for each $\alpha_x$ across random seeds are plotted as triangles, and the theoretical upper-bound $|A_{X=x}|$ as a dashed horizontal line. None of the triangles vertically exceed the dashed line, indicating that $|W_{A\mid X=x}| \leq |A_{X=x}|$, with equality at $\alpha_x \to \infty$. On average (round points), $|W_{A\mid X=x}|$ decreases as $\alpha_x$ shrinks to 0. This demonstrates a \emph{compression} of semantic categories into grammatical values as memory requirements are emphasized over decodability.

The right plots (green) demonstrate \Cref{lemma:entropy_feat_ref}, which states that, for a single referent, the \emph{entropy} of grammatical values $\mathcal H(W_{A}\mid X=x)$ is upper-bounded by the entropy of semantic categories $\mathcal H(A \mid X=x)$. Analogous to the center plots, $\mathcal H(W_{A}\mid X=x)$ is shown against $\alpha_x$ from 0 towards $\infty$. Similarly, $\mathcal H(W_{A}\mid X=x) \leq \mathcal H(A\mid X=x)$, with equality at $\alpha_x \to \infty$. On average (round points), $\mathcal H(W_{A}\mid X=x)$ decreases as $\alpha_x$ shrinks to 0, again illustrating \emph{compression} of semantic categories into grammatical values. 

\paragraph{Influence of $\alpha_x$} A granular look at the effect of $\alpha_x$ on the grammatical value distribution for a single referent, on $A=$gender, is shown in \Cref{fig:example_dists_x}. This panel illustrates that semantic encoding strategies are regulated by $\alpha_x$ (see Eq.~3, main), where from left to right, we have neutralization ($\alpha_x=0$), underspecification ($\alpha_x=0.65$), and full specification ($\alpha_x\to \infty)$. Neutralization (figure left) can be seen by all semantic attributes encoded into the same value; underspecification (middle) by the three semantic attributes being compressed to two values; and full specification (right) by the three attributes mapping each to its own value. In particular, we see that the underspecification example mirrors grammatical gender distributions in animate nouns of Romance languages, Slavic languages, and German-- here, we see that a grammatical feminine (top right) refers unambiguously to \emph{+female}, while the grammatical masculine (bottom right) maps back to \emph{+female}, \emph{+male}, and $\varnothing$.

\begin{figure}[htbp]
    \centering
    \begin{subcaption}
        \centering
        \includegraphics[width=\linewidth]{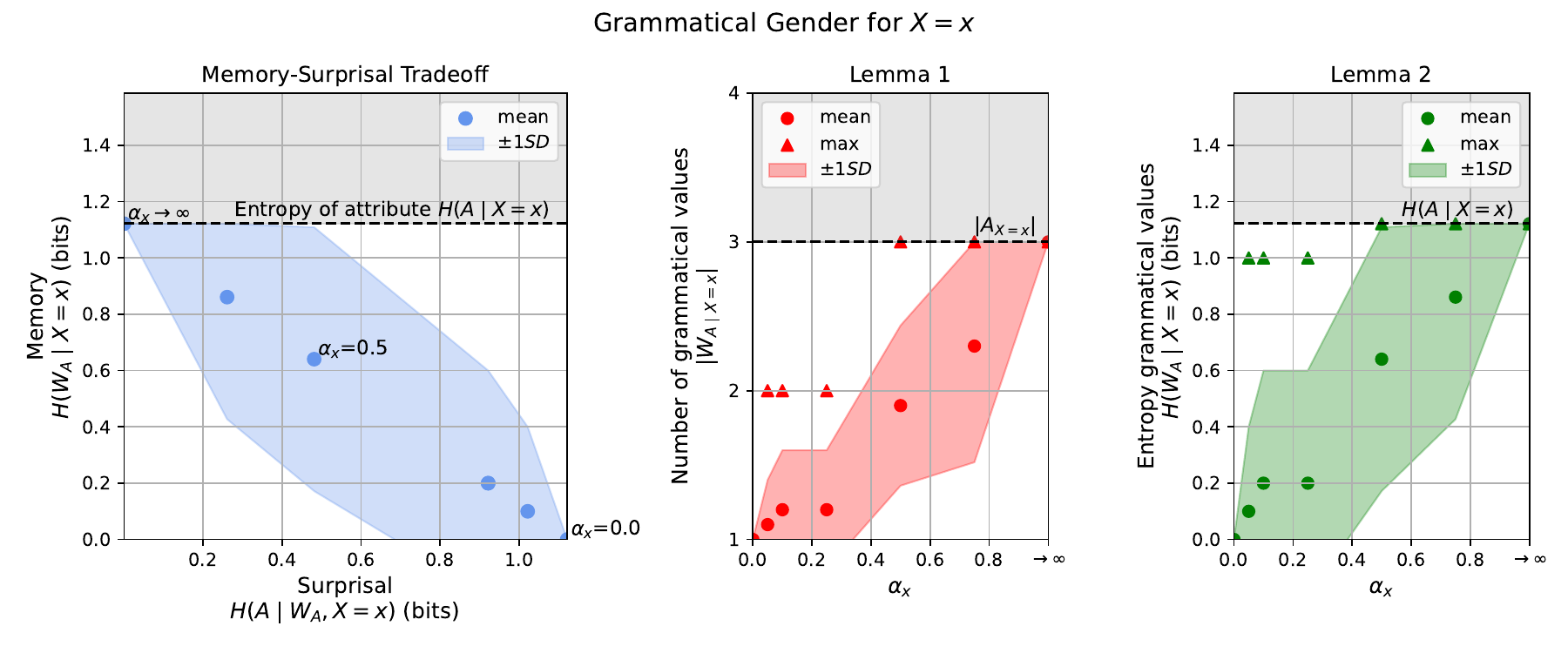}
        \label{fig:gender}
    \end{subcaption}
    
    \vspace{1em} 

    \begin{subcaption}
        \centering
        \includegraphics[width=\linewidth]{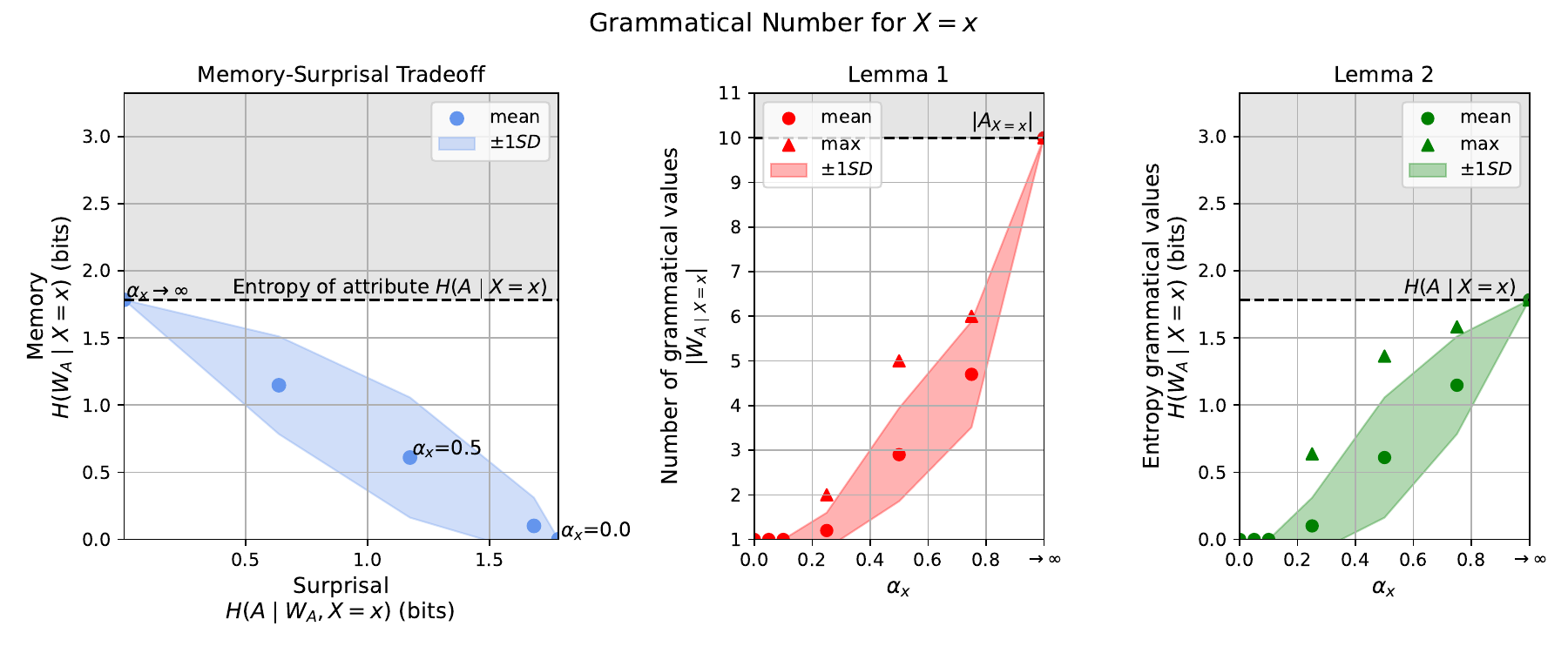}
        \label{fig:number}
    \end{subcaption}

    \caption{\textbf{Instance-level simulations} for hypothetical distributions of \textbf{(a)} gender/biological sex and \textbf{(b)} numerosity for a single referent $x$.}
    \label{fig:sims}
\end{figure}

\begin{figure}
    \centering
    \includegraphics[width=0.8\linewidth]{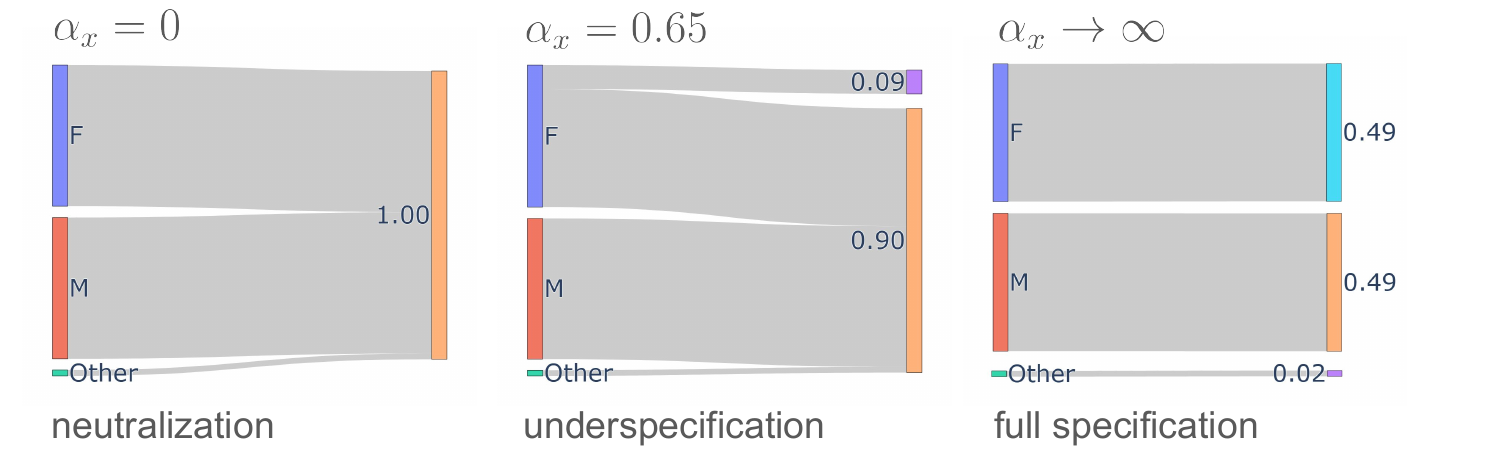}
    \caption{\textbf{Instance-level simulations: semantic encoding strategies}. For various values of $\alpha_x$ and $A=$gender, we show an example of semantic encoding strategies in Box 2 (main text). Here, $\alpha_x = 0$ confers \emph{neutralization}, $\alpha_x=0.65$ \emph{underspecification} (seen by F, M, and other all compressed to the same value), and $\alpha_x\to \infty$ full specification.}
    \label{fig:example_dists_x}
\end{figure}

%

\subsection*{Results: System-level}
Now, we demonstrate, for both number and gender, that \Cref{thm:num_feats,thm:inheritance} hold for a toy lexicon of $k$ words. Our goal is to show, no matter the setting tested, that the number of features in the overall system is bounded above by the number of semantic attributes. As explained in the above section \emph{Marginal distributions}, we set $|A|=3$ for gender (Male, Female, Other) and $|A|=10$ for numerosity. 

We first confirm \Cref{thm:inheritance}. In all tested settings, for gender and number, all $\beta$, and number of words $k \in \{2,4,8,16,32\}$, there was at least one word in the optimized lexicon for which $\mathcal H(A\mid W_A, X=x) < \mathcal H(A\mid X=x)$. \Cref{thm:num_feats} is illustrated in \Cref{fig:thm2}. For gender (left) and number (right), the distributions of $|W_A|$ (y-axis) are summarized for various $\beta$ (different colors) and for increasing number of words in the system (x-axis). The theoretical maximum predicted by \Cref{thm:num_feats}, which for gender is $3$ semantic categories and for number is $10$ semantic categories, is shown as a dashed line. In all cases, the number of grammatical values $|W_A|$ falls below the theoretical maximum, confirming \Cref{thm:num_feats}.

\paragraph{Influence of $\beta$} In \Cref{eq:objective_app}, $\beta \geq 0$ as a tradeoff between size and consistency. While \Cref{thm:num_feats} holds for any $\beta \geq 0$, that is, any tradeoff between size and consistency will produce $|W_A| \leq |A|$, $\beta$ does affect the complexity of the overall grammatical system. \Cref{fig:beta} shows an ablation for gender, where $k=4$ words, $\beta=0$ (no consistency term, size only), and $\beta\to\infty$ (no size term, consistency only). In both cases, $|W_A| \leq |A| = 3$; however, removing the consistency term (left) vastly increases the complexity of the grammatical system, while removing the size term preserves a one-to-one mapping (right) from semantic attribute to grammatical feature. Neither extreme is completely realistic, as real-world grammatical systems are neither extremely complicated nor one-to-one (right); this confirms the need for a tradeoff parameter that interpolates these two modes.

\begin{table}[]
    \centering
        \caption{System-level simulations, hyperparameter settings.}
    \begin{tabular}{c|c|c}
        Name & Description & Values \\
        $\lambda_1$ & simplicity optimality & $10$ \\
        $\lambda$ & instance-level optimality & $10$ \\
        $k$ & number of words & $\{2, 4, 8, 16, 32\}$ \\
        $\beta$ & size / consistency tradeoff & $\{0, 1.0, \to \infty\}$ \\
        $\alpha_x$ & memory / surprisal tradeoff & $k$ evenly-spaced values in $[0,5]$ incl.~boundary
    \end{tabular}
    \label{tab:system_level}
\end{table}

\begin{figure}
    \centering
    \includegraphics[width=0.75\linewidth]{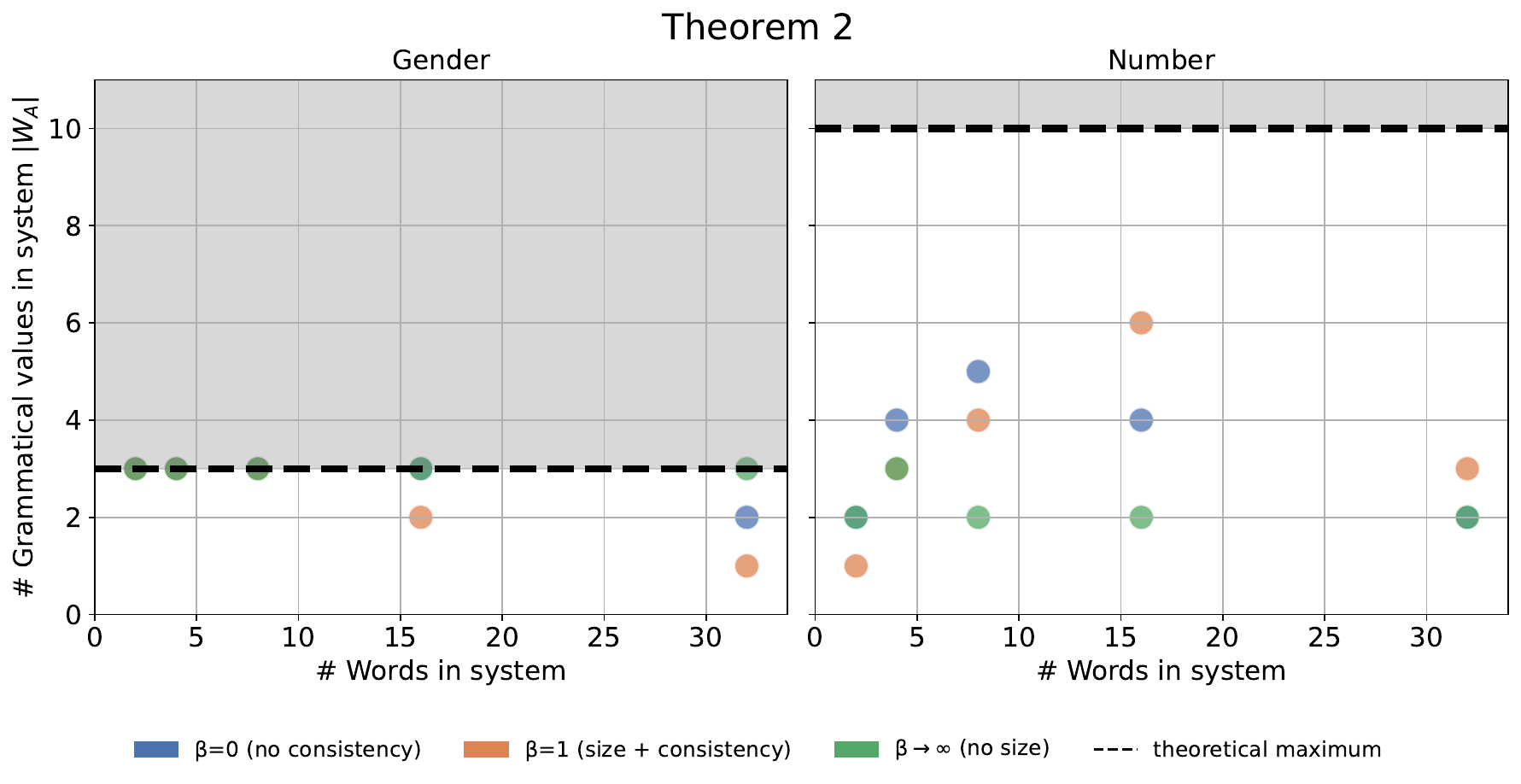}
    \caption{\textbf{System level, number of grammatical values}. We show the number of grammatical values in the system $|W_A|$ (y-axis) for both gender (left) and number (right), against the number of words in the system (x-axis). Here, each point is one optimized grammatical system, where the color indicates the value of $\beta$ (for other hyperparameters, see \Cref{tab:system_level}. The theoretical maximum predicted by \Cref{thm:num_feats}, which is equal to the number of semantic attributes, is plotted as a dark dashed line. Notably, for all of the tested settings, the number of grammatical values is bounded by the number of semantic categories, seen by round points lying under the dashed line.}
    \label{fig:thm2}
\end{figure}

\begin{figure}
    \centering
    \includegraphics[width=0.45\linewidth]{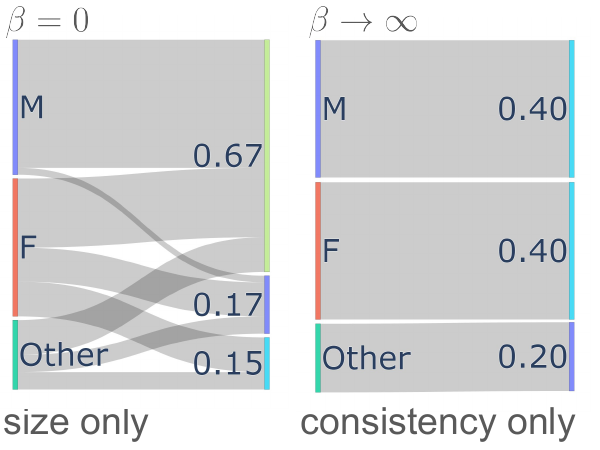}
    \caption{\textbf{Ablations of size, consistency in the system level simplicity term.} Setting $\beta=0$, that is, removing the consistency term (left) vastly increases the complexity of the grammatical system, while setting $\beta \to \infty$, that is, removing the size term, enforces a one-to-one mapping (right) from semantic attribute to grammatical feature.}
    \label{fig:beta}
\end{figure}

\newpage 
\section*{Extended Empirical Results}
\Cref{tab:anim_entropies} shows the grammatical distributions for the animate subset of nouns for all languages. 

\begin{table*}[h]
\centering
\caption{Animate subset. Number of possible values, entropy $\mathcal H$ of inflectional values in bits, and the KL-divergence $\mathcal D_{KL}$ from the optimal uniform distribution Max $\mathcal H$ are reported (left to right) for \textbf{gender} and \textbf{number} features, as well as for the grammatical system restricted to \emph{animate nouns}, for each language. From top to bottom, languages are grouped into Romance, Slavic, Semitic, and Germanic language families.}
\begin{tabular}{@{}lccccccccccccc@{}}
\toprule
& \multicolumn{4}{c}{\textbf{Gender}} & \multicolumn{4}{c}{\textbf{Number}} & \multicolumn{4}{c}{\textbf{Overall}} \\ \cmidrule(lr){2-5} \cmidrule(lr){6-9} \cmidrule(lr){10-13}
Language & \# Values & Max $\mathcal H$ & $\mathcal H$ & $\mathcal D_{KL}$ & \# Values & Max $\mathcal H$ & $\mathcal H$ & $\mathcal D_{KL}$ & \# Values & Max $\mathcal H$ & $\mathcal H$ & $\mathcal D_{KL}$ \\ \midrule
Catalan & $2$ & $1.0$ & $0.74$  & $0.26$ & $2$ & $1.0$ & $0.92$ & $0.08$ & $4$ & $2.0$ & $1.64$ & $0.36^\dagger$  \\
French & $2$ & $1.0$ & $0.66$ & $0.34$ & $2$ & $1.0$ & $0.86$ & $0.14$ & $4$ & $2.0$ & $1.49$ & $0.51$ \\
Italian & $2$ & $1.0$ & $0.61$ & $0.39$ & $2$ & $1.0$ & $0.88$ & $0.12$ & $4$ & $2.0$ & $1.47$ & $0.53$ \\ 
Spanish & $2$ & $1.0$ & $0.81$ & $0.19$ & $2$ & $1.0$ & $0.94$ & $0.06$ & $4$ & $2.0$ & $1.73$ & $0.27^\ddagger$ \\ 
\midrule
Polish & $2$ & $1.58$ & $0.61$ & $0.39$ & $3$ & $1.58$ & $0.93$ & $0.65$ & $6$ & $2.58$ & $1.51$ & $1.07^\ddagger$ \\
Slovene & $2$ & $1.58$ & $0.63$ & $0.37$ & $2$ & $1.0$ & $0.91$ & $0.67$ & $6$ & $2.58$ & $1.50$ & $1.08\ddagger$ \\
\midrule
Arabic & $2$ & $1.0$ & $0.47$ & $0.53$ & $3$ & $1.58$ & $1.0$ & $0.58$ & $6$ & $2.58$ & $1.91$ & $0.68^\ddagger$ \\ 
Hebrew & $2$ & $1.0$ & $0.68$ & $0.32$ & $3$ & $1.58$ & $0.99$ & $0.59$ & $6$ & $2.58$ & $1.65$ & $0.93$ \\
\midrule
German & $2$ & $1.58$ & $0.68$ & $0.32$ & $2$ & $1.0$ & $0.89$ & $0.11$ & $4$ & $2.0$ & $1.51$ & $0.48$  \\
English & $\o$ & -- & -- & -- & $2$ & $1.0$ & $0.93$ & $0.07$ & $2$ & $1.0$ & $0.93$ & $0.07$ \\
Dutch & $2$ & $1.0$ & $0.63$ & $0.37$ & $2$ & $1.0$ & $0.90$ & $0.10$ & $3$ & $1.58$ & $1.50$ & $1.08$  \\ 
Swedish & $2$ & $1.0$ & $0.99$ & $0.01^\dagger$ & $2$ & $1.0$ & $0.33$ & $0.67$ & $4$ & $2.0$ & $1.32$ & $0.67$ \\
\bottomrule
\end{tabular}
\label{tab:anim_entropies}
\end{table*}


\end{document}